\newcommand{\circled}[2][]{%
	\tikz[baseline=(char.base)]{%
		\node[shape = circle, draw, inner sep = 0pt]
		(char) {\phantom{\ifblank{#1}{#2}{#1}}};%
		\node at (char.center) {\makebox[0pt][c]{#2}};}}
\title{On the Discrimination Power and Effective Utilization \\ of Active Learning Measures in Version Space Search
}
\author{Patrick Rodler\\
Alpen-Adria Universit\"at Klagenfurt \\
patrick.rodler@aau.at
} 
\newcommand{\LD}{\mathcal{L}}
\newcommand{\UD}{\mathcal{U}}
\newcommand{\Hyp}{\mathcal{H}}
\newcommand{\HypP}{\mathcal{H}^+}
\newcommand{\HypN}{\mathcal{H}^-}
\newcommand{\HypZ}{\mathcal{H}^0}
\newcommand{\VP}{\mathit{V}^+}
\newcommand{\VN}{\mathit{V}^-}
\newcommand{\VZ}{\mathit{V}^0}
\newcommand{\V}{\mathcal{V}}
\newcommand{\Pt}[2]{\mathfrak{P}_{#1}(#2)}
\newcommand{\Part}{\mathfrak{P}}
\newcommand{\DPO}{\mathsf{DPO}}
\newtheorem{definition}{Definition}{}
{}
\newtheorem{proposition}{Proposition}{}
{}
{}
\begin{document}

\maketitle

\begin{abstract}
Active Learning (AL) methods have proven cost-saving against passive supervised methods in many application domains. An active learner, aiming to find some target hypothesis, formulates sequential queries to some oracle. The set of hypotheses consistent with the already answered queries is called version space. Several query selection measures (QSMs) for determining the best query to ask next have been proposed. 
Assuming binary-outcome queries, 
we analyze various QSMs wrt.\ to the discrimination power of their selected queries within the current version space.
As a result, we derive superiority
and equivalence relations between these QSMs and introduce improved versions of existing QSMs to overcome identified issues. 
The obtained picture gives a hint about which QSMs should preferably be used in pool-based AL scenarios.
Moreover, we deduce properties optimal queries wrt.\ QSMs must satisfy. Based on these, we demonstrate how efficient heuristic search methods for optimal queries in query synthesis AL scenarios can be devised.
%
\end{abstract}

\section{INTRODUCTION}\label{sec:intro}

	A supervised learning scenario where the 
learner
is allowed to choose the training data from which it learns is referred to as \emph{Active Learning (AL)} 
\cite{settles09}.
In AL, an oracle, e.g.\ a human expert, can be queried to label any 
query from a predefined query space. Given a set of labeled queries, the set of all hypotheses explaining the query labels is called \emph{version space} \cite{mitchell1977version}.
We assume that the learner maintains (a subset of) the current version space and uses sequential queries to the oracle to gradually refine it. To this end, we assume some update operator that takes a set of hypotheses and a new labeled query as input and returns a new set of hypotheses (possibly including previously unseen ones). So, the learner performs a \emph{version space search} for the best hypothesis.
%

AL has been successfully applied to a variety of domains such as text classification \cite{tong2001text}, image retrieval \cite{tong2001image}, concept learning \cite{DBLP:journals/ml/CohnAL94}, music retrieval \cite{mandel2006music}, machine translation \cite{ambati2010translation}, 
cancer classification \cite{liu2004cancer}, medical image classification \cite{hoi2006image}, reinforcement learning \cite{lopes2009reinforcement} and natural language processing \cite{olsson2009literature}.  
It often achieves significant (even exponential \cite{DBLP:conf/nips/Dasgupta04}) savings compared to ``passive'' supervised learning in terms of \emph{sample complexity} \cite{kulkarni1993active,balcan2010true}, i.e.\ querying cost. 
Hence, AL is especially useful when query labeling comes at high cost and there is a large amount of 
unlabeled data to choose from 
\cite{settles09}.

In the mentioned classical AL application scenarios (1)~hypotheses, e.g.\ decision trees or neural networks, usually give \emph{complete information} in that they predict a label for each query. In addition, (2)~unlabeled data is often 
\emph{cheaply obtainable}.
However, there are other use cases of AL where (1) and (2) do not hold. Such use cases can be found e.g.\ in hypotheses discrimination tasks arising in logic-based abduction \cite{kakas1992abductive}, 
theory selection \cite{sattar1991}, answer set programming 
\cite{brewka2011answer}, 
model-based diagnosis \cite{Reiter87,dekleer1987}, knowledge base debugging \cite{Rodler2015phd}, Semantic Web applications \cite{Kalyanpur.Just.ISWC07,Shchekotykhin2012} and ontology alignment repair \cite{meilicke2011}.
%
E.g., in a model-based diagnosis task one might ask 
``which faulty components in my car cause it not to start?''. The goal is then -- based on a (possibly incomplete) logical model describing the car -- to determine the actual explanation (actually faulty components) among a number of competing explanations for the faulty behavior of the car. In this scenario, an explanation $e$ (along with the model) might not predict any outcome for a specific test $t$ with the car \cite{dekleer1987}. So, no result of $t$ can rule out $e$.  
Also, (useful) unlabeled data, i.e.\ queries, might be costly to construct, e.g.\ when its computation relies on logical derivations from the given model 
\cite{dekleer1987,DBLP:journals/jair/FeldmanPG10a,Shchekotykhin2012,Rodler2015phd}. 
%
%

In any AL task the goal of a query is to discriminate well between competing hypotheses. 
At this, a minimal requirement usually postulated 
is that any query outcome must lead to the dismissal of at least some (known) hypothesis, i.e.\ it makes no sense to query something all known hypotheses agree about. This helps to initially restrict the query search space to the so-called \emph{region of uncertainty} \cite{DBLP:journals/ml/CohnAL94}. To extract an informative query from this region AL methods employ various \emph{query selection measures (QSMs)}, real-valued functions quantifying the quality of queries.
QSMs can be used in different AL scenarios such as \emph{query synthesis} and \emph{pool-based selection} \cite{settles09}. In the former a learner tries to \emph{generate} an unlabeled query with sufficiently good QSM-value. 
In the latter, the best query wrt.\ a QSM in a (usually large) set of unlabeled queries
is determined by comparing the QSM-value of all queries in the set.

The non-fulfillment of (1) might cause QSMs in a pool-based scenario to select queries with suboptimal discrimination power despite the presence of better queries in the pool. This issue is crucial when opting for a suitable QSM to be used in pool-based AL. 
%
%
The violation of (2), on the other hand, motivates the need for advanced query synthesis methods granting high query quality even though \emph{actually generating} only a small number of query candidates. 

\noindent\textbf{Contributions.} 
In this paper we analyze various AL QSMs 
and
\textbf{(1)}~define a plausible general discrimination preference order (DPO) on 
queries (formalizing the notion of ``discrimination power''), 
\textbf{(2)}~formally characterize a superiority relation
on QSMs based on the (degree of) their compliance with the DPO,
\textbf{(3)}~figure out superiority relationships between QSMs which suggests a preference order on QSMs helping to opt for the most suitable QSM in pool-based AL,
\textbf{(4)}~derive improved (parameterized) versions from some QSMs to overcome unveiled deficits,
\textbf{(5)}~formalize the notion of equivalence between QSMs based on their preference order on queries, 
\textbf{(6)}~give equivalence classes of QSMs under various conditions (query spaces, QSM parametrizations),
\textbf{(7)}~analyze QSM functions regarding their global optima and determine properties of optimal input arguments, and
\textbf{(8)}~show how these properties can be used to design heuristic search procedures for the \emph{systematic} construction of (nearly) optimal queries wrt.\ a QSM in an AL query synthesis scenario.

\section{PRELIMINARIES}\label{sec:basics}
In an AL setting we consider 
there is a set of unlabeled queries $\UD$ and a (possibly empty) set of already
labeled queries $\LD$. Each (unlabeled) \emph{query} is a sentence in first-order logic. A \emph{labeled query} in $\LD$ is a tuple $(Q,a_Q)$ where $Q$ is a query and $a_Q \in \setof{0,1}$. 
The answers\footnote{Note, we use \emph{answer} and \emph{label} interchangeably.} $a_Q = 1$ and $a_Q = 0$ mean that the first-order sentence represented by $Q$ is true and false, respectively. 
%
Queries are answered by an \emph{oracle} given by the total function $ans: \UD \to \setof{0,1}$ which maps queries $Q \in \UD$ to their respective answer $a_Q$.


The active learner attempts to find (an approximation of) the \emph{target hypothesis} $h_t$ from a hypothesis space $\Hyp$ which depends on the learning task. E.g., for a decision tree learning task each $h \in \Hyp$ is a candidate decision tree; for a model-based car diagnosis task each $h \in \Hyp$ is a possible diagnosis, i.e.\ an assumption about the faulty/healthy-state of each (relevant) component of the car that explains all observations about the car's (faulty) behavior. 

Due to the generality of the query notion, e.g.\ instance labels for binary ($\{0,1\}$) classification can be obtained by asking, say, $Q:=(h_t(i)=1)$ for some instance $i$, 
all concept learning query types discussed in \cite{DBLP:journals/ml/Angluin87} can be captured as well as observations or (system) tests in discrimination tasks mentioned in Sec.~\ref{sec:intro} can be specified as e.g.\ $Q:= (\mathit{carBatteryFlat})$. Moreover, classification (regression) model learners restricted to queries $Q:=(h_t(i) \in \mathit{rng})$ for a discrete (continuous) range $\mathit{rng}$ can be modeled.

Given a set of labeled queries $\LD$, any hypothesis $h \in \Hyp$ is still possible if it is \emph{consistent} with $\LD$. The set including all $h \in \Hyp$ consistent with $\LD$ is called the current \emph{version space} $\V \subseteq \Hyp$ \cite{DBLP:journals/ai/Mitchell82}. 
For tasks where each $h \in \Hyp$ gives \emph{complete} information (e.g.\ decision trees), each $h$ predicts 
a label for each query. In particular, all $h \in \V$ \emph{entail} all answers $a_Q$ for $(Q,a_Q) \in \LD$.
So, queries in this case make a binary discrimination between the competing hypotheses in $\Hyp$, cf.\ \cite{DBLP:journals/etai/BryantMOKRK01}.
However, in the more general setting we consider, hypotheses in $\Hyp$ might include \emph{incomplete} (e.g.\ logical) knowledge, and thus not entail any label for a query $Q$. 
In this case, all $h \in \V$ 
just \emph{not contradict} any $a_Q$ for $(Q,a_Q) \in \LD$.
%
Hence, in the general scenario each query $Q$ imposes a partition on $\Hyp$ 
into three sets $\langle\HypP_Q,\HypN_Q,\HypZ_Q\rangle$: 
$\HypP_Q$ includes those $h \in \Hyp$ consistent only with $a_Q = 1$ (predicting $Q$'s positive answer), $\HypN_Q$ those $h \in \Hyp$ consistent only with $a_Q = 0$ (predicting $Q$'s negative answer), and $\HypZ_Q$ those consistent with both $a_Q = 1$ and $a_Q = 0$ (not predicting any answer). 
That is, the new (still consistent) hypotheses set after $ans(Q) = 1$ is known (i.e.\ $(Q,1)$ is added to $\LD$) is $\Hyp\setminus\HypN_Q$. Otherwise, if 
$(Q,0)$ is added to $\LD$, the new hypotheses set is $\Hyp\setminus\HypP_Q$.

If the target hypothesis $h_t$ is in $\HypP_Q$ ($\HypN_Q$), then $ans(Q) = 1$ ($ans(Q) = 0$). We stress that the oracle is a \emph{total} function and thus assumed to answer \emph{every} query $Q \in \UD$, even if $h_t \in \HypZ_Q$. 
E.g., even if $h_t$ in a car diagnosis task 
neither entails $(lightWorks)$ nor $\lnot(lightWorks)$, an oracle (e.g.\ a car mechanic) can verify whether the light of the car works.
For either outcome, $h_t$ remains valid a-posteriori. 

As the explicit computation of the full version space $\V \subseteq \Hyp$ might be hard or even infeasible \cite{dekleer1987,DBLP:conf/colt/SeungOS92,DBLP:conf/isaim/DasguptaHM08,Shchekotykhin2012,Rodler2015phd}, we assume that some subset $V$ of $\V$ is known to the learner at each query selection. $V$ may comprise e.g.\ (some of) the most ``succinct'' \cite{DBLP:journals/jair/FeldmanPG10a}, most probable \cite{DBLP:conf/aaai/Kleer91}, most specific or most general \cite{mitchell1977version} hypotheses. 
As with $\Hyp$, a query $Q$ partitions $V$ into $\VP_Q := V \cap \HypP_Q$, $\VN_Q := V \cap \HypN_Q$ and $\VZ_Q := V \cap \HypZ_Q$. 
We denote by $\Pt{V}{Q} := \langle \VP_Q,\VN_Q,\VZ_Q\rangle$ the (unique) \emph{partition of $Q$ (wrt.\ $V$)}. Generally, multiple queries $Q$ might have the same partition  $\Pt{V}{Q}$.
We call $Q \in \UD$ a \emph{discriminating query (DQ) (wrt.\ $V$)} iff $V^+_Q \neq \emptyset$ and $V^-_Q \neq \emptyset$. 
Similarly, we call $\Pt{V}{Q}$ for a DQ $Q$ a \emph{discriminating partition (DP) (wrt.\ $V$)}.
That is, either label $a_Q \in \setof{0,1}$ of a DQ $Q$ eliminates at least one $h \in V$ or, respectively, at least two hypotheses in $V$ make different predictions as to $a_Q$. Intuitively, a learner will try to avoid to ask any $Q\in \UD$ which is not a DQ. Because -- based on the current evidence in terms of $V$ -- it cannot be sure that any relevant new information will be gained by obtaining $a_Q$. The DQs are exactly the elements of the \emph{region of uncertainty \cite{DBLP:journals/ml/CohnAL94} (wrt.\ $V$)}.
A DQ $Q$ is termed \emph{weak DQ (wrt.\ $V$)} iff $\VZ_Q \neq \emptyset$. Otherwise, we call $Q$ \emph{strong DQ (wrt.\ $V$)}.

%
An AL \emph{query selection measure (QSM)} is a function $m: \UD \to \mathbb{R}$ assigning to each query $Q \in \UD$ a (quality) measure $m(Q) \in \mathbb{R}$. A \emph{theoretical optimum $X$ wrt.\ $m$} is a hypothetical (not necessarily real) DQ $X$ 
which globally optimizes $m(X)$. 
Depending on the QSM $m$, ``optimizing $m$'' can mean either maximizing or minimizing $m$. An \emph{optimal query $Q$ wrt.\ $m$ and $V$} is a DQ wrt.\ $V$ with optimal $m(Q)$ among all DQs wrt.\ $V$. Note, theoretical optima and optimal queries need not be unique.

In line with the works \cite{dekleer1987,DBLP:journals/etai/BryantMOKRK01,Shchekotykhin2012,Rodler2015phd} we characterize a probability space over $\Hyp$ as follows: We assume that each $h \in \Hyp$ has an a-priori probability $p(h)$ of being the target hypothesis $h_t$, i.e.\ $p(h) := p(h = h_t)$. Given a currently known subset $V$ of the version space $\V \subseteq \Hyp$, we define 
$p(X) := \sum_{h \in X} p(h)$
for $X \subseteq V$ and assume $p$ to be normalized over $V$ such that that $p(V)=1$.
Since the version space includes only still possible hypotheses, $p(h) > 0$ must hold for all $h \in V$. 
For any $Q\in\UD$ and oracle $ans$: 
$p(ans(Q)=1) := p(\VP_Q)+\frac{p(\VZ_Q)}{2}$
and
$p(ans(Q)=0) = p(\VN_Q)+\frac{p(\VZ_Q)}{2}$
i.e.\ the non-predicting hypotheses $h \in \VZ_Q$ are assumed to predict each answer with a probability of $\frac{1}{2}$.
%
%
The posterior probability $p(h \mid ans(Q) = a_Q)$ of some $h \in \Hyp$ can be computed by the Bayesian Theorem as 
$p( ans(Q) = a_Q | h)\;\,p(h)/p(ans(Q) = a_Q)$
where 
$p(ans(Q)=1\mid h)$ is $1$ if $h \in \HypP_Q$, $0$ if $h \in \HypN_Q$, and $\frac{1}{2}$ if $h \in \HypZ_Q$. 
%
%

\renewcommand{\arraystretch}{1.1}
\begin{table}
	\caption{Running Example: Some sample partitions wrt.\ $V = \setof{h_1,\dots,h_5}$ (top) and 
		probability distributions $p_1$, $p_2$ and $p_3$ over $V$ (bottom).}
	\label{tab:example_QPs}
	\scriptsize
	\begin{center}
	\begin{tabular}{cccc}
		\rule[-6pt]{0pt}{16pt} $i$	& $\VP_{Q_i}$ & $\VN_{Q_i}$ & $\VZ_{Q_i}$ \\ 
		\hline 
		$1$	& $\setof{h_1,h_2}$ & $\setof{h_3,h_4,h_5}$ & $\emptyset$ \\ 
		$2$	& $\setof{h_1,h_2}$ & $\setof{h_3,h_4}$ & $\setof{h_5}$ \\ 
		$3$	& $\setof{h_4}$ & $\setof{h_1,h_2,h_3,h_5}$ & $\emptyset$ \\ 
		$4$	& $\setof{h_1,h_2,h_5}$ & $\setof{h_4}$ & $\setof{h_3}$ \\ 
	\end{tabular} 
	%
	%
	\begin{tabular}{lccccc}
		\toprule[1pt]  
		& $h_1$ & $h_2$ & $h_3$ & $h_4$ & $h_5$ \\ 
		\hline 
		$p_1(h_i)$	& $0.35$ & $0.05$ & $0.15$ & $0.25$ & $0.2$ \\
		$p_2(h_i)$	& $0.01$ & $0.02$ & $0.8$ & $0.15$ & $0.02$ \\
		$p_3(h_i)$	& $0.4$ & $0.2$ & $0.05$ & $0.1$ & $0.25$ \\
		\hline 
	\end{tabular}
	\end{center}
\end{table}


\textbf{Example:}
Consider Tab.~\ref{tab:example_QPs} which gives some partitions $\Pt{V}{Q_i}$ of $V := \setof{h_1,\dots,h_5}$ for $1\leq i \leq 4$. All associated queries $Q_i$ (not explicitly given in Tab.~\ref{tab:example_QPs}) are DQs as $\VP_{Q_i}$ and $\VN_{Q_i}$ are non-empty for $1\leq i \leq 4$. Hence, each partition in the table is a DP. Moreover, $Q_1,Q_3$ are strong and $Q_2,Q_4$ weak DQs due to empty and non-empty $\VZ_{Q_i}$, respectively.

Assuming the probabilities $p := p_1$ over $V$ (see Tab.~\ref{tab:example_QPs}), e.g.\ $p(ans(Q_3)=1) = p(\VP_{Q_3}) = p(\setof{h_4}) = 0.25$ and $p(ans(Q_2)=0) = p(\VN_{Q_2}) + \frac{1}{2} p(\VZ_{Q_2}) = p(\setof{h_3,h_4}) + \frac{1}{2} p(\setof{h_5}) = 0.15+0.25+\frac{1}{2} 0.2 = 0.5$.

Let $m_1(Q) := |p(\VP_Q) - p(\VN_Q)| + p(\VZ_Q)$ be a QSM (to be minimized). Then $\langle m_1(Q_1),\dots,m_1(Q_4)\rangle = \langle 0.2, 0.2, 0.5, 0.5\rangle$. Supposing that $Q_1, \dots, Q_4$ are \emph{all} possible DQs wrt.\ $V$, the optimal queries wrt.\ $m_1$ and $V$ are $Q_1$ and $Q_2$. A theoretical optimum $X$ wrt.\ $m_1$ satisfies $p(\VP_{X}) = p(\VN_{X}) = 0.5$ and $p(\VZ_{X}) = 0$. 

Suppose $Q_2$ is labeled negatively, i.e.\ $ans(Q_2) = 0$. Then the hypotheses $h_1,h_2$ are invalidated. The remaining ones are $V \setminus \VP_{Q_2} = \setof{h_3,h_4,h_5}$. The (Bayes) updated probability distribution over $V$ is then 
$p(h_1) = p(h_2) = 0$, $p(h_3) = \frac{0.15}{0.5} = 0.3$, $p(h_4) = \frac{0.25}{0.5} = 0.5$ and $p(h_5) = \frac{(1/2) 0.2}{0.5} = 0.2$.
\qed

\section{ANALYSIS OF QUERY SELECTION MEASURES}\label{sec:contrib}
In this section\footnote{\emph{Detailed} proofs of all results are given in \cite[Sec.~3.2 ff.]{zattach}.} we motivate and specify a general discrimination-preference order (DPO) over queries in $\UD$, study various QSMs regarding 
their compliance with the DPO, present derived equivalence and superiority relations among these QSMs and specify some plausible new QSMs, e.g.\ as improved versions of existing ones. The results facilitate the decision upon which QSM to use in \emph{pool-based AL scenarios}. Moreover, we analyze the QSM functions wrt.\ their (theoretically) optimal inputs which lets us deduce properties of optimal strong DQs for the discussed QSMs. These properties provide the basis for a systematic construction of (or search for) optimal DQs in a \emph{query synthesis AL scenario}.    

We first point out that the partition of a query $Q \in \UD$ (along with the probability measure $p$) gives already all the relevant information that is taken into account by QSMs used for version space search. Because the partition enables \\
\textbf{(1)}~the verification whether a query $Q$ is a DQ  (i.e.\ whether the query is in the region of uncertainty), \\
\textbf{(2)}~the test whether $Q$ is strong (i.e.\ $\VZ_Q =\emptyset$), \\
\textbf{(3)}~an estimation of the impact $Q$'s answers have in terms of hypotheses elimination (potential \mbox{a-posteriori} change of the version space), \\
\textbf{(4)}~the assessment of the probability of $Q$'s positive and negative answers (e.g.\ to determine the uncertainty of $Q$).
%
%
%
%
%
%

\noindent\textbf{Relevant Definitions and Properties.} QSMs
might basically focus on pretty different properties of a query's partition when estimating its goodness. However,
independently of the concrete used QSM, 
queries with a higher ``discrimination power'' should be preferred. Intuitively, given a query $Q_1 \in \UD$ which is objectively better than $Q_2 \in \UD$, we do not want a reasonable QSM to propose $Q_2$.
%
We next define a general order on queries, called DPO, thereby formalizing the notion of ``discrimination power''.
Note, in the following we always assume $\V$ to be the current version space and $V \subseteq \V$. 
\begin{definition}\label{def:DPO}
	Let $Q,\overline{Q} \in \UD$. 
	Further, for any query $Q\in \UD$ let $V_Q[\lnot a] \subseteq V$ denote the hypotheses predicting $\lnot a$ (i.e.\ inconsistent with $ans(Q)=a$). That is, exactly $V_Q[\lnot a]$ is eliminated among all hypotheses in $V$ given that $Q$ is answered by $a$. 
	
	Then we call $Q$ \emph{discrimination-preferred to $\overline{Q}$ (wrt.\ $V$)} iff there is an injective function $f:\setof{0,1}\to\setof{0,1}$ that maps each of $\overline{Q}$'s answers $\overline{a}_1, \overline{a}_2 \in \setof{0,1}$ ($\overline{a}_1 \neq \overline{a}_2$) 
	to one of $Q$'s answers $a_i = f(\overline{a}_i)$ such that \\
	%
	%
	\textbf{(1)}~\label{def:DPO:cond1} $V_Q[\lnot a_i] \supseteq V_{\overline{Q}}[\lnot \overline{a}_i]$ for some $i \in \setof{1,2}$, and \\
	\textbf{(2)}~\label{def:DPO:cond2} $V_Q[\lnot a_j] \supset V_{\overline{Q}}[\lnot \overline{a}_j]$ for $j \in \setof{1,2}$ and $j \neq i$. 
	
	We use $Q \prec_{\DPO} \overline{Q}$ to state that $Q$ is discrimination-preferred to $\overline{Q}$ and call $\{(Q,\overline{Q})\mid Q \prec_{\DPO} \overline{Q} \}$ \emph{the discrimination preference order (DPO)}.
\end{definition}
Simply put, $Q \prec_{\DPO} \overline{Q}$ means:
For each result one might get by asking the oracle $\overline{Q}$, there is a better result in terms of hypotheses elimination one can get by asking $Q$. In particular, for one of the answers $\overline{a}_i$ of $\overline{Q}$, some answer $a_i$ to $Q$ eliminates at least the same hypotheses. For the other answer $\overline{a}_j (\neq \overline{a}_i)$ of $\overline{Q}$, the other answer $a_j (\neq a_i)$ to $Q$ eliminates strictly more hypotheses.

%
%
%
The idea underlying the DPO is that asking $Q$ is \emph{always} (i.e.\ for any answer) better than asking $\overline{Q}$ given that the target hypothesis is in $V$ and predicts an answer for both queries:
\begin{proposition}\label{prop:if_ht_in_V_and_predicting_then_disc-pref_query_always_better}
	Let $Q \prec_{\DPO} \overline{Q}$ 
	and the target hypothesis $h_t \in \VP_{Q} \cup \VN_{Q}$ and $h_t \in \VP_{\overline{Q}} \cup \VN_{\overline{Q}}$. Then the remaining hypotheses in $V$ after adding $(Q,ans(Q))$ to $\LD$ is a subset of the remaining hypotheses in $V$ after adding $(\overline{Q},ans(\overline{Q}))$ to $\LD$.
\end{proposition}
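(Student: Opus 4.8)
The plan is to reduce the claim to a single set-containment and then pin down how the two actual oracle answers correspond under the matching function $f$ from Definition~\ref{def:DPO}. Write $a := ans(Q)$ and $\overline{a} := ans(\overline{Q})$ for the true answers. Since the hypotheses remaining in $V$ after adding $(Q,a)$ to $\LD$ are exactly $V \setminus V_Q[\lnot a]$, and those remaining after adding $(\overline{Q},\overline{a})$ are $V \setminus V_{\overline{Q}}[\lnot \overline{a}]$, the asserted subset relation is equivalent to the reverse containment of the \emph{eliminated} sets, namely $V_{\overline{Q}}[\lnot \overline{a}] \subseteq V_Q[\lnot a]$. Establishing this containment is the whole task.

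First I would exploit the two predicting-hypothesis assumptions. Because $h_t \in \VP_Q \cup \VN_Q$, the target predicts a unique answer for $Q$, and by the oracle's definition this prediction is precisely $a$; hence $h_t$ survives the true answer but would be discarded by the complementary one, i.e.\ $h_t \notin V_Q[\lnot a]$ while $h_t \in V_Q[a]$ (the set that $Q$'s complementary answer $\lnot a$ would eliminate). The analogous reasoning for $\overline{Q}$, using $h_t \in \VP_{\overline{Q}} \cup \VN_{\overline{Q}}$, gives $h_t \in V_{\overline{Q}}[\overline{a}]$ and $h_t \notin V_{\overline{Q}}[\lnot \overline{a}]$. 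I would also record the usable form of the DPO hypothesis: since $f:\setof{0,1}\to\setof{0,1}$ is injective it is a complement-preserving bijection, so $f(\lnot b)=\lnot f(b)$, and conditions \textbf{(1)}--\textbf{(2)} together yield the (non-strict) inequality $V_Q[\lnot f(b)] \supseteq V_{\overline{Q}}[\lnot b]$ for \emph{both} $b\in\setof{0,1}$.

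The crux, and the step I expect to be the main obstacle, is to show that the true answers match under $f$, that is $f(\overline{a}) = a$; once this holds, instantiating the DPO inequality at $b=\overline{a}$ gives exactly $V_{\overline{Q}}[\lnot \overline{a}] \subseteq V_Q[\lnot f(\overline{a})] = V_Q[\lnot a]$, finishing the argument. I would argue by contradiction: suppose instead $f(\overline{a}) = \lnot a$. Then $f(\lnot \overline{a}) = \lnot f(\overline{a}) = a$, so instantiating the DPO inequality at $b = \lnot \overline{a}$ yields $V_{\overline{Q}}[\overline{a}] = V_{\overline{Q}}[\lnot(\lnot\overline{a})] \subseteq V_Q[\lnot a]$. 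But $h_t \in V_{\overline{Q}}[\overline{a}]$ from the previous step, which would force $h_t \in V_Q[\lnot a]$, contradicting $h_t \notin V_Q[\lnot a]$. Hence $f(\overline{a}) = a$, as required.

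Finally I would remark why both predicting assumptions are indispensable, since this is exactly what makes the correspondence step work: they tie the oracle's true answers to $h_t$'s predictions and guarantee that $h_t$ is never eliminated by a correct answer. If instead $h_t \in \VZ_Q$ or $h_t \in \VZ_{\overline{Q}}$, the true answer would no longer be pinned down by $h_t$ and $h_t$ could survive \emph{either} outcome, so the matching $f(\overline{a}) = a$ may fail and the conclusion need not hold.
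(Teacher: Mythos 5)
Your proof is correct and follows essentially the same route as the paper's (very terse) proof: it rests on the same three ingredients, namely that the oracle's answer coincides with $h_t$'s prediction when $h_t$ predicts, that $h_t$ predicts exactly one answer for each of $Q$ and $\overline{Q}$, and the subset relations in Definition~\ref{def:DPO}. Your explicit contradiction argument establishing $f(\overline{a}) = ans(Q)$ is precisely the detail the paper's sketch leaves implicit, and it is carried out correctly.
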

\begin{proof} 
	The proposition follows from the fact that (i)~for any $Q \in \UD$, $ans(Q) = 1$ if $h_t \in \VP_Q$ and $ans(Q) = 0$ if $h_t \in \VN_Q$, that (ii)~$(h_t \in \VP_{Q}) \oplus (h_t \in \VN_{Q})$ and $(h_t \in \VP_{\overline{Q}}) \oplus (h_t \in \VN_{\overline{Q}})$, and (iii)~the subset-relations in 
	(1)
	and 
	(2)
	in Def.~\ref{def:DPO}.
\end{proof}
\textbf{Example (cont'd):}
In Tab.~\ref{tab:example_QPs}, $Q_1 \prec_{\mathsf{DPO}} Q_2$ and $Q_3 \prec_{\mathsf{DPO}} Q_4$. E.g.\ the latter, by Def.~\ref{def:DPO}, holds since (1)~for $ans(Q_3) = 0$, which eliminates $\setof{h_4}$, there is an answer, namely $ans(Q_4) = 1$, which also dismisses $\setof{h_4}$, and (2)~for $ans(Q_3) = 1$ (making $\setof{h_1,h_2,h_3,h_5}$ invalid) the answer $ans(Q_4) = 0$ is strictly worse (invalidating only $\setof{h_1,h_2,h_5}$).

Given e.g.\ $h_t \in \setof{h_1,h_2,h_4,h_5}$, 
then the hypothesis elimination rate (wrt.\ $V$) achieved by the discrimination-preferred $Q_3$ is  better than the one of $Q_4$ for any oracle $ans$ (Prop.~\ref{prop:if_ht_in_V_and_predicting_then_disc-pref_query_always_better}).
\qed

Every QSM imposes a (preference) order on a given set of queries $\UD$:
\begin{definition}\label{def:precedence_order_defined_by_q-partition_quality_measure} 
	Let $m$ be a QSM and $Q,Q' \in \UD$. Then \emph{$Q$ is preferred to $Q'$ by $m$}, formally $Q \prec_m Q'$, iff 
	%
	\textbf{(a)}~$m(Q) < m(Q')$ if $m$ is optimized by minimization,
	\textbf{(b)}~$m(Q) > m(Q')$ if $m$ is optimized by maximization.
\end{definition}
Two QSMs are equivalent iff they impose exactly the same preference order on queries: 
\begin{definition}\label{def:measures_equivalent}
	Let $m_1,m_2$ be QSMs.
	Then we call \emph{$m_1$ equivalent to $m_2$} (\emph{$m_1$ $\mathfrak{X}$-equivalent to $m_2$}), formally $m_1 \equiv m_2$ ($m_1\equiv_{\mathfrak{X}} m_2$), iff for all queries ${Q,Q' \in (\mathfrak{X} \subseteq)\; \UD}$: $Q \prec_{m_1} Q'$ iff $Q \prec_{m_2} Q'$.
\end{definition}
The next definition facilitates our analysis of the degree of compliance of QSMs with the DPO:
\begin{definition}\label{def:measure_satisfies_consistentWith_DPO}
	Let $m$ be a QSM. We say that $m$ 
	\emph{preserves (or: satisfies) the DPO (over $\mathfrak{X}$)} iff whenever $Q \prec_{\DPO} Q'$ (and $Q,Q' \in \mathfrak{X}$), it holds that $Q \prec_{m} Q'$ 
	(i.e.\ the preference order imposed on queries by $m$ is a superset of the DPO). \\
	Further, we call $m$ \emph{consistent with the DPO (over $\mathfrak{X}$)} iff whenever $Q \prec_{\DPO} Q'$ (and $Q,Q' \in \mathfrak{X}$), it does not hold that $Q' \prec_{m} Q$ (i.e.\ the preference order imposed on queries by $m$ has an empty intersection with the inverse DPO).
\end{definition} 
We call QSMs with a higher compliance with the DPO superior to others:
\begin{definition}\label{def:measure_superior}
	Let $m_1,m_2$ be QSMs. We call $m_2$ \emph{superior to} $m_1$ (or: $m_1$ \emph{inferior to }$m_2$), formally $m_2 \prec m_1$, iff \\
	\textbf{(1)}~for some pair of queries $Q,Q'$ where $Q \prec_{\DPO} Q'$ and not $Q \prec_{m_1} Q'$ it holds that $Q \prec_{m_2} Q'$ 
	(i.e.\ in some cases $m_2$ does, but $m_1$ does not satisfy the DPO),
	and \\
	\textbf{(2)} for no pair of queries $Q,Q'$ where $Q \prec_{\DPO} Q'$ and not $Q \prec_{m_2} Q'$ it holds that $Q \prec_{m_1} Q'$
	(i.e.\ whenever $m_2$ does not satisfy the DPO, $m_1$ does not satisfy it either). \\
	%
	%
	Analogously, we call $m_2$ \emph{$\mathfrak{X}$-superior to }$m_1$ (or: $m_1$ \emph{$\mathfrak{X}$-inferior to }$m_2$), formally $m_2 \prec_{\mathfrak{X}} m_1$, iff 
	superiority of $m_2$ to $m_1$ holds over $\mathfrak{X} \subseteq \UD$.	
\end{definition}

The following proposition can be easily verified:
\begin{proposition}\label{prop:precedence_order_is_strict_order}
	$\prec_m$ and $\prec_{\DPO}$ are strict orders, i.e.\ irreflexive, asymmetric and transitive relations over queries. 
	$\equiv$ and $\equiv_{\mathfrak{X}}$ are equivalence relations over QSMs.
	$\prec$ and $\prec_{\mathfrak{X}}$ are strict orders over QSMs.
\end{proposition}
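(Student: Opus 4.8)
The plan is to treat the three groups of claims separately, disposing of the two mechanical cases first and reserving most effort for $\prec_{\DPO}$ and the superiority relation on QSMs. For $\prec_m$, observe that by Def.~\ref{def:precedence_order_defined_by_q-partition_quality_measure}, $Q \prec_m Q'$ holds exactly when $m(Q) \mathbin{R} m(Q')$ with $R$ being $<$ (minimization) or $>$ (maximization). Since $<$ and $>$ are strict total orders on $\mathbb{R}$ and $\prec_m$ is their preimage under $m:\UD\to\mathbb{R}$, irreflexivity, asymmetry and transitivity transfer verbatim; e.g.\ $Q \prec_m Q' \prec_m Q''$ gives $m(Q)\,R\,m(Q')\,R\,m(Q'')$, hence $m(Q)\,R\,m(Q'')$. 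For $\equiv$ and $\equiv_{\mathfrak{X}}$, note that by Def.~\ref{def:measures_equivalent} the statement $m_1 \equiv m_2$ asserts that the relations $\prec_{m_1}$ and $\prec_{m_2}$ (restricted to $\mathfrak{X}$ in the $\equiv_{\mathfrak{X}}$ case) coincide; equality of relations is trivially reflexive, symmetric and transitive, so I would simply unfold the ``iff'' and chain the biconditionals.

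The substantive part is $\prec_{\DPO}$. I would establish irreflexivity and transitivity and then obtain asymmetry for free, since a transitive irreflexive relation is automatically asymmetric ($Q\prec_{\DPO}\overline Q\prec_{\DPO}Q$ would force $Q\prec_{\DPO}Q$). For irreflexivity, the witnessing $f:\setof{0,1}\to\setof{0,1}$ in Def.~\ref{def:DPO} is injective on a two-element set, hence a bijection, i.e.\ either the identity or the transposition. Taking $\overline{Q}=Q$, condition~(2) demands a strict inclusion $V_Q[\lnot a_j]\supset V_Q[\lnot \overline a_j]$: under the identity this reads $S\supset S$ and is impossible, while under the transposition conditions~(1)--(2) force both $V_Q[\lnot 0]\supseteq V_Q[\lnot 1]$ and $V_Q[\lnot 1]\supset V_Q[\lnot 0]$ (or the symmetric pair), again a contradiction.

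The main obstacle is transitivity. Given $Q_1\prec_{\DPO}Q_2$ via bijection $f$ and $Q_2\prec_{\DPO}Q_3$ via bijection $g$, I would take the composite $f\circ g$ as the witness for $Q_1\prec_{\DPO}Q_3$ and chain inclusions, so that for every answer $a$ of $Q_3$ one has $V_{Q_1}[\lnot f(g(a))]\supseteq V_{Q_2}[\lnot g(a)]\supseteq V_{Q_3}[\lnot a]$. The care needed is to certify that at least one of the two chains is strict: the strict inclusion guaranteed for $Q_2\prec_{\DPO}Q_3$ at some answer $a^\ast$ propagates (a superset of a strict superset is a strict superset), yielding a strict chain at $a^\ast$, while the other answer retains a non-strict inclusion --- exactly conditions~(2) and~(1) of Def.~\ref{def:DPO}.

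Finally, for $\prec$ and $\prec_{\mathfrak{X}}$ on QSMs I would recast Def.~\ref{def:measure_superior} set-theoretically. For a QSM $m$ let $S_m:=\{(Q,Q'): Q\prec_{\DPO}Q' \text{ and } Q\prec_m Q'\}$ be the set of DPO-pairs that $m$ additionally respects. Unfolding the definition, condition~(2) states precisely $S_{m_1}\subseteq S_{m_2}$ (no DPO-pair is respected by $m_1$ yet missed by $m_2$), and condition~(1) states $S_{m_2}\setminus S_{m_1}\neq\emptyset$; hence $m_2\prec m_1$ iff $S_{m_1}\subsetneq S_{m_2}$. Since strict set inclusion $\subsetneq$ is irreflexive, asymmetric and transitive, these properties transfer immediately to $\prec$, and the identical argument with $S_m$ restricted to pairs from $\mathfrak{X}$ settles $\prec_{\mathfrak{X}}$. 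The only point demanding care here is verifying that the unfolding of conditions~(1)--(2) genuinely matches $\subsetneq$, which is the crux of this part.
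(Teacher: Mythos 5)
Your proof is correct. The paper itself offers no argument for this proposition beyond the remark that it ``can be easily verified'' (deferring details to the external technical report), so there is no in-paper proof to compare approaches with; your verification --- pulling back $<$ along $m$ for $\prec_m$, reading $\equiv$ as equality of relations, composing the witnessing bijections and propagating one strict inclusion for transitivity of $\prec_{\DPO}$, and recasting Def.~\ref{def:measure_superior} as $S_{m_1}\subsetneq S_{m_2}$ --- is exactly the kind of routine check the authors are alluding to, and all of its steps hold up against Definitions~\ref{def:DPO}--\ref{def:measure_superior}.
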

The next proposition summarizes some easy consequences of the provided definitions:
\begin{proposition}\label{prop:properties_derived_from_definitions}
	Let $m,m_1,m_2$ be QSMs, $Q,Q' \in \UD$, $\mathfrak{X} \subseteq \UD$
	and $Q_{m_i} \in \UD$ denote the optimal query wrt.\ $m_i \; (i\in \setof{1,2})$ and $V$. 
	Then: \\
	\textbf{(1)} 
	$m_1 \equiv m_2$ implies $Q_{m_1} = Q_{m_2}$.\\
	\textbf{(2)}  \label{prop:properties_derived_from_definitions:m1_satDPO_m2_not_satDPO_then_m1_superior_m2} 
	If $m_1$ does and $m_2$ does not satisfy the DPO, then $m_1 \prec m_2$.\\
	\textbf{(3)}  \label{prop:properties_derived_from_definitions:vzQ'>vzQ} $Q \prec_{\DPO} Q'$ implies $\VZ_{Q'} \supset \VZ_Q$. Thus, $\VZ_{Q'} \neq \emptyset$.\\
	\textbf{(4)}  If $m$ satisfies the DPO (over $\mathfrak{X}$), then $m$ is consistent with the DPO (over $\mathfrak{X}$).\\
	\textbf{(5)}  \label{prop:properties_derived_from_definitions:construct_weaker_query_wrt_DPO} $\Pt{V}{Q'}$ of any $Q'$ satisfying $Q \prec_{\DPO} Q'$ can be obtained from $\Pt{V}{Q}$ by transferring 
	$X$ with $\emptyset \subset X \subset \VP_Q \cup \VN_Q$ to $\VZ_Q$  
	and by possibly interchanging the positions of the resulting sets $\VP_Q \setminus X$ and $\VN_Q \setminus X$, i.e.\ $\Pt{V}{Q'} = \tuple{\VP_{Q'},\VN_{Q'},\VZ_{Q'}}$ is either equal to
	$\langle\VP_Q \setminus X, \VN_Q \setminus X,\VZ_Q \cup X \rangle$
	or to 
	$\langle \VN_Q \setminus X, \VP_Q \setminus X,\VZ_Q \cup X \rangle$.
\end{proposition}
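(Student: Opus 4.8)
The plan is to settle parts \textbf{(1)}, \textbf{(2)} and \textbf{(4)} directly from the definitions, and to treat \textbf{(3)} and \textbf{(5)} together, since both reduce to translating Def.~\ref{def:DPO} into concrete containment relations between the three blocks of $\Pt{V}{Q}$ and $\Pt{V}{Q'}$.

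For \textbf{(4)}, I would simply invoke the asymmetry of the strict order $\prec_m$ (Prop.~\ref{prop:precedence_order_is_strict_order}): if $m$ satisfies the DPO over $\mathfrak{X}$ and $Q \prec_{\DPO} Q'$ with $Q,Q' \in \mathfrak{X}$, then $Q \prec_m Q'$, whence asymmetry forbids $Q' \prec_m Q$, which is exactly consistency. For \textbf{(2)}, I would unfold Def.~\ref{def:measure_superior}: its condition~(2) is vacuously satisfied because $m_1$ satisfies the DPO, so there is no DPO-related pair on which $m_1$ fails; its condition~(1) is witnessed by any pair $Q \prec_{\DPO} Q'$ on which $m_2$ fails (one exists since $m_2$ violates the DPO), for which the compliance of $m_1$ yields $Q \prec_{m_1} Q'$. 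Hence $m_1 \prec m_2$. For \textbf{(1)}, I would note that a query is optimal wrt.\ $m$ and $V$ exactly if it is a $\prec_m$-minimal DQ, i.e.\ no DQ is $\prec_m$-preferred to it; since $m_1 \equiv m_2$ means $\prec_{m_1}$ and $\prec_{m_2}$ coincide on $\UD$, the two sets of optimal (i.e.\ $\prec_m$-minimal) DQs agree, which gives the claim.

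The core is \textbf{(3)}. Here I would use that the injection $f:\setof{0,1}\to\setof{0,1}$ of Def.~\ref{def:DPO} is a permutation, hence either the identity or the transposition, together with $V_Q[\lnot 1] = \VN_Q$ and $V_Q[\lnot 0] = \VP_Q$. In the identity case conditions~(1)--(2) read $\VP_Q \supseteq \VP_{Q'}$ and $\VN_Q \supseteq \VN_{Q'}$ with at least one inclusion strict; in the transposition case they read $\VP_Q \supseteq \VN_{Q'}$ and $\VN_Q \supseteq \VP_{Q'}$, again with one strict. In either case, taking the union and using disjointness of $\VP_Q$ and $\VN_Q$ gives $\VP_Q \cup \VN_Q \supset \VP_{Q'} \cup \VN_{Q'}$; complementing within the fixed set $V$ then yields $\VZ_{Q'} = V \setminus (\VP_{Q'} \cup \VN_{Q'}) \supset V \setminus (\VP_Q \cup \VN_Q) = \VZ_Q$, and $\VZ_{Q'} \supset \VZ_Q \supseteq \emptyset$ forces $\VZ_{Q'} \neq \emptyset$.

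For \textbf{(5)} I would reuse this case split with the explicit choice $X := \VZ_{Q'} \setminus \VZ_Q = (\VP_Q \cup \VN_Q) \setminus (\VP_{Q'} \cup \VN_{Q'})$, which by \textbf{(3)} is a subset of $\VP_Q \cup \VN_Q$ disjoint from $\VZ_Q$. In the identity case $\VP_{Q'} = \VP_Q \setminus X$ and $\VN_{Q'} = \VN_Q \setminus X$, giving the non-interchanged form $\langle \VP_Q \setminus X, \VN_Q \setminus X, \VZ_Q \cup X\rangle$; in the transposition case the same computation yields $\VN_{Q'} = \VP_Q \setminus X$ and $\VP_{Q'} = \VN_Q \setminus X$, i.e.\ the interchanged form $\langle \VN_Q \setminus X, \VP_Q \setminus X, \VZ_Q \cup X\rangle$. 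Nonemptiness $\emptyset \subset X$ follows from the strict inclusion established in \textbf{(3)}, and properness $X \subset \VP_Q \cup \VN_Q$ from $Q'$ being a DQ, so that $\VP_{Q'} \cup \VN_{Q'} \neq \emptyset$. I expect the main obstacle to be precisely this bookkeeping — correctly decoding the abstract $V_Q[\lnot a]$ notation and the permutation $f$ into the two partition shapes while tracking the strict-versus-nonstrict inclusions — rather than any deep argument; the only genuinely delicate point is the properness of $X$, which is where the assumption that $Q'$ is discriminating enters.
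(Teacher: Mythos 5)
Your proof is correct and follows exactly the route the paper intends: the paper gives no explicit proof of this proposition beyond calling its parts ``easy consequences of the provided definitions'' (deferring details to its technical report), and your direct unfolding of Def.~\ref{def:DPO} --- decoding $V_Q[\lnot 1]=\VN_Q$, $V_Q[\lnot 0]=\VP_Q$ and splitting on the two possible injections $f$ --- together with the vacuity/witness argument for Def.~\ref{def:measure_superior} and the asymmetry of $\prec_m$ is precisely that. You also rightly pinpoint the one subtlety: the properness $X \subset \VP_Q \cup \VN_Q$ in part \textbf{(5)} genuinely needs $\VP_{Q'} \cup \VN_{Q'} \neq \emptyset$, an assumption the proposition leaves implicit but which holds automatically once attention is restricted to discriminating queries, as the paper does throughout.
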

Prop.~\ref{prop:properties_derived_from_definitions}.5 substantiates the plausibility of the DPO since it shows that DPO-dispreferred queries 
result from adding some hypotheses to those ($\VZ_Q$) that cannot be invalidated by any query answer. Moreover, Prop.~\ref{prop:properties_derived_from_definitions}.3 implies that no weak query can be DPO-preferred to a strong one. Neither can a non-discriminating query be DPO-preferred to a DQ.

\textbf{Example (cont'd):}
Alternatively to directly using Def.~\ref{def:DPO} as before, Prop.~\ref{prop:properties_derived_from_definitions}.5 enables to prove 
$Q_3 \prec_{\mathsf{DPO}} Q_4$ by construction of $Q_4$ from $Q_3$ using $X := \setof{h_3}$. On the other hand, e.g.\ the DPO does not relate $Q_2$ with $Q_3$ or vice versa. This can be easily verified by Prop.~\ref{prop:properties_derived_from_definitions}.5, i.e.\ no suitable $X$ exists.  


Let $m_1,m_2$ be QSMs and their preference orders 
imposed on $V$ be (\emph{the transitive closure of}) $\{Q_1 \prec_{m_1} Q_3, Q_3 \prec_{m_1} Q_2, Q_2 \prec_{m_1} Q_4\}$ and $\{Q_1 \prec_{m_2} Q_3, Q_2 \prec_{m_2} Q_3, Q_1 \prec_{m_2} Q_4, Q_2 \prec_{m_2} Q_4\}$. Clearly, $m_1$ satisfies the DPO since its imposed order is a superset of the DPO 
$\setof{(Q_1,Q_2), (Q_3,Q_4)}$ 
over $V$ (cf.\ Def.~\ref{def:measure_satisfies_consistentWith_DPO}). $m_2$, on the contrary, is only consistent with the DPO since neither $Q_2 \prec_{m_2} Q_1$ nor $Q_4 \prec_{m_2} Q_3$ holds. It does not satisfy the DPO since e.g.\ $Q_1 \prec_{m_2} Q_2$ does not hold. So, by Prop.~\ref{prop:properties_derived_from_definitions}.2 we can conclude that $m_1$ is $\mathfrak{X}$-superior to $m_2$, i.e.\ $m_1 \prec_{\mathfrak{X}} m_2$ where $\mathfrak{X} := \{Q_1,\dots,Q_4\}$. Let $Q_4 \prec_{m_3} Q_3$ for some QSM $m_3$, then $m_3$ neither satisfies nor is consistent with the DPO.

By Prop.~\ref{prop:properties_derived_from_definitions}.3, no $Q_j$ can be discrimination-preferred to $Q_1$ or $Q_3$ since $\VZ_{Q_i} = \emptyset$ for $i \in \setof{1,3}$.
\qed

\renewcommand{\arraystretch}{1.2}
\begin{table*}[t!]
	\caption{QSMs $m$ (col.~2) grouped by query selection frameworks (QS-FWs) (col.~1). Functions $m(Q)$ (col.~3) are optimized for arguments $Q$ that maximize ($\nearrow$) or minimize ($\searrow$) $m(Q)$ (col.~4). $\checkmark$ means $m$ satisfies the DPO, $(\checkmark)$ that $m$ is consistent with, but does not satisfy the DPO, and $\times$ that $m$ is not consistent with the DPO (col.~5). Col.~6 reports whether ($\checkmark$) or not ($\times$) a theoretical optimum exists for the QSM.
		%
		%
		Numbers $_{\textbf{i)}}$ are explained below the table. Statements such as $_{(z>2)}$ state conditions under which a property holds. } 
	\label{tab:measures_satisfy_DPR_theoretical_opt_exists}
	\scriptsize
	\begin{center}
	\begin{tabular}{lccccc}
		\toprule
		QS-FW 				& QSM $m$  			& $m(Q)$ & opt. & DPO				&   	$\exists$ theor. opt.	 					\\
		\midrule
		\multirow{3}{*}{US} 	& 	$\mathsf{LC}$ 		& $p(ans(Q) = a_{Q,\max})$ & $\searrow$ &	 $\times$					&  			$\checkmark$	\\
		& 	$\mathsf{M}$ 		& $p(ans(Q)=a_{Q,1}) - p(ans(Q)=a_{Q,2})$ & $\searrow$ &	 $\times$	 				&  			$\checkmark$			\\
		& 	$\mathsf{H}$ 		& $-\sum_{a \in\setof{0,1}} p(ans(Q)=a) \log_2 p(ans(Q)=a)$ & $\nearrow$ &	 $\times$	 				&  				$\checkmark$		\\
		& 	$\mathsf{GI}$	& $1- p(ans(Q)=1)^2 - p(ans(Q)=0)^2$ & $\nearrow$ &$\times$	&	$\checkmark$		\\ 
		\midrule
		
		\multirow{2}{*}{IG}		& 	$\mathsf{ENT}$ 		& $p(\VZ_Q)+\sum_{a\in\setof{0,1}} p(ans(Q)=a) \log_2 p(ans(Q)=a)$     & $\searrow$ &$\times$&	$\checkmark$		\\
		& 	$\mathsf{ENT}_z$	& $z\,p(\VZ_Q)+\sum_{a\in\setof{0,1}} p(ans(Q)=a) \log_2 p(ans(Q)=a)$ & $\searrow$ &$\times$/$\checkmark_{\textbf{4)}}$		&	$\checkmark$		\\
		\midrule
		\multirow{4}{*}{QBC}	& 	$\mathsf{SPL}$ 		& $\left|\, |\VP_Q| - |\VN_Q| \,\right| + |\VZ_Q|$  & $\searrow$ &	 $(\checkmark)$			&	 $\checkmark$		\\
		& 	$\mathsf{SPL}_z$	& $\left|\, |\VP_Q| - |\VN_Q| \,\right| + z\,|\VZ_Q|$ & $\searrow$ &	 
		$\times_{(z<1)}$/$(\checkmark)_{(z=1)}$/$\checkmark_{(z>1)}$
		&	  	$\checkmark$			\\
		& 	$\mathsf{VE}$ 		& $- \sum_{X\in\setof{\VP_Q,\VN_Q}} \frac{|X|}{|\VP_Q \cup \VN_Q|} \log_2 \frac{|X|}{|\VP_Q \cup \VN_Q|}$ & $\nearrow$ &	 $\times$					&  	$\checkmark$		\\
		& 	$\mathsf{KL}$ 		& $-\sum_{X\in\setof{\VP_Q,\VN_Q}}\frac{|X|}{|\VP_Q \cup \VN_Q|} \log_2 \frac{p(X)}{p(\VP_Q \cup \VN_Q)}$ & $\nearrow$  &	 $\times$					&  	$\times$	\\
		\midrule
		\multirow{6}{*}{EMC}	& 	$\mathsf{EMCa}$		& $2\,\left[p(ans(Q)=1) - [p(ans(Q)=1)]^2 \right] - \frac{p(\VZ_Q)}{2}$ & $\nearrow$ &	 $\times$	&  	$\checkmark$		\\
		& 	$\mathsf{EMCa}_z$	& $2\,\left[p(ans(Q)=1) - [p(ans(Q)=1)]^2 \right] - z\, \frac{p(\VZ_Q)}{2} $ & $\nearrow$ &	
		$\times_{(z< 2)}$/$\checkmark_{(z\geq 2)}$ 
		&  	$\checkmark$			\\
		& 	$\mathsf{EMCb}$		& $p(ans(Q)=1) |\VN_Q| + p(ans(Q)=0) |\VP_Q|$ & $\nearrow$ &	 $\times$					&  	$\times$		\\
		& 	$\mathsf{MPS}$ 		& $0$ if $Q$ not a strong DQ or $\left| |\VP_Q|-|\VN_Q|\right| \neq 2$, $V_{Q,\min}$ else \;$_{\textbf{1)}}$  & $\nearrow$ &	 $(\checkmark)$				&  	$\checkmark$		\\
		& 	$\mathsf{MPS}'$ 	& $-|\VZ_Q|$ if $Q$ not a strong DQ or $\left| |\VP_Q|-|\VN_Q| \right| \neq 2$, $V_{Q,\min}$ else \;$_{\textbf{1)}}$ & $\nearrow$ &	 $\checkmark$				&  	$\checkmark$		\\
		& 	$\mathsf{BME}$ 		& $|V_{Q,p,\min}|$ \;$_{\textbf{2)}}$ & $\nearrow$ &	 $\times$					&  	$\checkmark$		\\
		\midrule
		\multirow{2}{*}{RL} 	& 	$\mathsf{RIO}'$ 		& $\frac{\mathsf{ENT}(Q)}{2}+V_{Q,n}$ \;$_{\textbf{3)}}$ & $\searrow$ &	 $\times$			 		&  	$\checkmark$		\\
		& 	$\mathsf{RIO}'_z$	& $\frac{\mathsf{ENT}_z(Q)}{2}+V_{Q,n}$ \;$_{\textbf{3)}}$ & $\searrow$ &	 
		$\times$				
		&  	$\checkmark$		\\
		\bottomrule
	\end{tabular}
	\end{center}
	\renewcommand{\arraystretch}{1}
	
	
	\begin{tabular}{p{16cm}}
		\textbf{Key:} 
		\quad 
		\textbf{1):} $V_{Q,\min} := \argmin_{X\in\setof{\VP_Q,\VN_Q}}(|X|)$. \quad
		\textbf{2):}~$V_{Q,p,\min}$ is equal to $\VN_Q$ if $p(\VN_Q) < p(\VP_Q)$, to $\VP_Q$ if $p(\VP_Q) < p(\VN_Q)$, and to $0$ else. \quad
		\textbf{3):} $V_{Q,n}$ is equal to $\min\{|\VP_Q|,|\VN_Q|\} - n$ if $\min\{|\VP_Q|,|\VN_Q|\} \geq n$, and equal to $|V|$ else. $n$ denotes the minimal number of hypotheses the next query must eliminate (in the worst case)
		\cite{Rodler2013}. \quad
		\textbf{4):}~In general, $\checkmark$ holds only if $z$ is specified as per Prop.~\ref{prop:z_parametrization_and_DPO_compliance}. \\ 
		%
		%
		%
	\end{tabular}
\end{table*}

\noindent\textbf{The Discussed QSMs.} 
Next, we briefly sketch the QSMs we address in this work (see Tab.~\ref{tab:measures_satisfy_DPR_theoretical_opt_exists}), grouped by \emph{Query Selection Framework (QS-FW)} \cite{settles09}.

\noindent\emph{Uncertainty Sampling (US):} Here, the principle is to select the query about whose answer the learner is most uncertain (as per the probability measure $p$)
given the current evidence $V$. 
\emph{Least Confidence} ($\mathsf{LC}$) selects the query
whose most likely answer $a_{Q,\max}$ has least probability. 
\emph{Margin Sampling} ($\mathsf{M}$) targets the query for which the probabilities between most and second most likely label $a_{Q,1}$ and $a_{Q,2}$ are most similar. 
\emph{Entropy} ($\mathsf{H}$) prefers the query whose outcome is most uncertain wrt.\ information entropy.
\emph{Gini Index} ($\mathsf{GI}$) is actually not an AL QSM, but is borrowed from decision tree learning theory \cite{breiman1984classification}.

\noindent\emph{Information Gain (IG):} The query favored by $\mathsf{ENT}$ maximizes the information gain \cite{moret1982decision,quinlan1986induction,dekleer1987}, or equivalently, minimizes the expected a-posteriori entropy wrt.\ 
hypotheses in 
$V$. 
%
%
As proven in \cite{dekleer1987}, $\mathsf{ENT}$ can be equivalently represented as 
shown in Tab.~\ref{tab:measures_satisfy_DPR_theoretical_opt_exists}.

\noindent\emph{Query by Committee (QBC):} QBC criteria use the competing hypotheses in 
$V$
as a committee $C$. Each \emph{predicting} committee member $h \in V$ has a vote on the classification of a $Q \in \UD$, i.e.\ the committee (for $Q$) is $C = V \setminus \VZ_Q = \VP_Q \cup \VN_Q$. The query $Q$ yielding the highest disagreement among all committee members is considered most informative. There are different ways of estimating the disagreement:
\emph{Vote Entropy} ($\mathsf{VE}$) 
selects the query for which the entropy of the relative prediction frequencies is maximal.
At this, 
$|X|/|\VP_Q \cup \VN_Q|$ with $X = \VP_Q$ ($X = \VN_Q$)
is the relative prediction frequency of label $1$ ($0$).
The \emph{Kullback-Leibler-Divergence} ($\mathsf{KL}$) proposes the query 
that manifests the largest average disagreement 
between the label distributions of any 
$h \in C$ and the consensus of the entire 
$C$ (cf.\ \cite[p.~17]{settles09} for a formal specification).
By simple mathematics, one can derive that the $\mathsf{KL}$ measure has the shape as given in Tab.~\ref{tab:measures_satisfy_DPR_theoretical_opt_exists} \cite[Prop.~26]{zattach}. 
\emph{Split-In-Half} ($\mathsf{SPL}$) \cite{moret1982decision,mitchell1997machine,Shchekotykhin2012} tries to eliminate exactly half of the currently known hypotheses, i.e.\ suggests queries which split $V$ into $\VP_Q$ and $\VN_Q$, both of size $|V|/2$ (implying $|\VZ_Q|=0$). 

\noindent\emph{Expected Model Change (EMC):} The principle is to favor the query that would impart the greatest change to the current model if its label was known. Interpreted in the sense of version spaces \cite{mitchell1977version}, we view all the available evidence $V$
as ``model''. ``Maximum expected model change'' can be interpreted in a way that (a)~the expected \emph{probability mass} of invalidated hypotheses in $V$ is maximized or (b)~the expected \emph{number} of invalidated hypotheses in $V$ is maximized.
The 
resulting QSMs, which we call $\mathsf{EMCa}$ for (a) and $\mathsf{EMCb}$ for (b), are depicted in Tab.~\ref{tab:measures_satisfy_DPR_theoretical_opt_exists}. 
Further, we propose the new QSM \emph{Most Probable Singleton} ($\mathsf{MPS}$). It favors DQs with empty $\VZ_Q$ where one of $\VP_Q,\VN_Q$ is a singleton and this singleton has maximum probability. Since in this case the probability of this singleton is equal to the probability of one answer of $Q$ (cf.\ Sec.~\ref{sec:basics}), it attempts to maximize the probability of deleting the 
maximum possible number 
of hypotheses in $V$. 
The variant $\mathsf{MPS}'$ of $\mathsf{MPS}$ additionally penalizes queries $Q$ with $\VZ_Q \neq \emptyset$.  
Another new QSM we introduce is \emph{Biased Maximal Elimination} ($\mathsf{BME}$).
The idea is to achieve a bias (probability $> 0.5$) towards an answer that rules out a maximal possible number of hypotheses. 

\noindent\emph{Reinforcement Learning (RL):} A ``risk-optimization'' reinforcement learning QSM ($\mathsf{RIO}$) was introduced in \cite{Rodler2013} to overcome performance issues in terms of sample complexity of $\mathsf{SPL}$ and $\mathsf{ENT}$ given unreasonable a-priori 
probabilities. Based on the hypothesis elimination rate achieved by the already asked queries, $\mathsf{RIO}$ adapts a learning parameter 
which determines the minimum number of hypotheses $n$ the next query must eliminate (in the worst case). 
Tab.~\ref{tab:measures_satisfy_DPR_theoretical_opt_exists} gives a slightly modified version $\mathsf{RIO}'$ of $\mathsf{RIO}$ which can be expressed in closed form (cf.\ \cite[Rem.~8]{zattach}). 
Among those queries that approach $n$ best (i.e.\ minimize $V_{Q,n}$, see Tab.~\ref{tab:measures_satisfy_DPR_theoretical_opt_exists}), the best query wrt.\ the $\mathsf{ENT}$ QSM is selected. 
%

\noindent\textbf{Compliance of QSMs with the DPO.} We next discuss how far the QSMs in Tab.~\ref{tab:measures_satisfy_DPR_theoretical_opt_exists} agree with the DPO in terms of Def.~\ref{def:measure_satisfies_consistentWith_DPO}. Results are summarized in col.~5 of Tab.~\ref{tab:measures_satisfy_DPR_theoretical_opt_exists}.
\begin{proposition}
	The QSMs $\mathsf{LC}$, $\mathsf{M}$, $\mathsf{H}$,  $\mathsf{ENT}$, $\mathsf{VE}$, $\mathsf{KL}$, $\mathsf{EMCa}$, $\mathsf{EMCb}$, $\mathsf{BME}$ and $\mathsf{RIO}'$ are not consistent with the DPO. Further, $\mathsf{SPL}$ and $\mathsf{MPS}$ are consistent with, but do not satisfy the DPO.
\end{proposition}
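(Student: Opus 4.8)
The plan is to treat the two clauses separately. By Prop.~\ref{prop:properties_derived_from_definitions}.5, every $Q'$ with $Q \prec_{\DPO} Q'$ arises from $Q$ by moving a nonempty proper subset $X \subseteq \VP_Q \cup \VN_Q$ into $\VZ_Q$ (and possibly swapping the $+$/$-$ labels), and by Prop.~\ref{prop:properties_derived_from_definitions}.3 this forces $\VZ_{Q'} \neq \emptyset$. For the ten measures claimed non-consistent it then suffices, by Def.~\ref{def:measure_satisfies_consistentWith_DPO}, to exhibit for each a single pair $Q \prec_{\DPO} Q'$ with $Q' \prec_m Q$. For $\mathsf{SPL}$ and $\mathsf{MPS}$ I will instead prove consistency by a structural argument and then defeat satisfaction with one tie example.

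The core observation is that pushing a high-probability hypothesis out of a dominant answer class into $\VZ_Q$ simultaneously rebalances the two answer probabilities \emph{and} the committee vote counts, which is exactly what the uncertainty- and disagreement-type measures reward. Concretely, I would take the strong DQ $Q$ with $\VP_Q = \{h_1,h_2\}$, $p(h_1)=0.78$, $p(h_2)=0.21$, and $\VN_Q=\{h_3\}$, $p(h_3)=0.01$, and let $Q'$ be its DPO-dispreferred variant obtained by moving $X=\{h_1\}$ into $\VZ$. The answer probabilities then shift from $0.99/0.01$ to $0.6/0.4$ and the vote split from $2{:}1$ to $1{:}1$. A direct evaluation shows that each of $\mathsf{LC}$, $\mathsf{M}$, $\mathsf{H}$, $\mathsf{ENT}$, $\mathsf{EMCa}$, $\mathsf{KL}$, $\mathsf{RIO}'$ (with, e.g., $n=1$, so the $V_{Q,n}$ term is equal for both) and $\mathsf{VE}$ strictly prefers $Q'$; I would include the one-line arithmetic for each, noting that the $\VZ$-penalty carried by $\mathsf{ENT}$, $\mathsf{EMCa}$ and $\mathsf{RIO}'$ is here outweighed by the large entropy/variance gain, precisely because the moved mass is big while the classes were extremely skewed.

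The measures $\mathsf{EMCb}$ and $\mathsf{BME}$ do not react to balance as such and hence need separate, deliberately asymmetric examples. For $\mathsf{EMCb}$ (expected number of eliminations) I would take $\VP_Q$ a single low-probability hypothesis and $\VN_Q$ large with one heavy member; moving that heavy member into $\VZ$ raises $p(ans(Q)=1)$ and thereby the expected count $p(ans(Q)=1)\,|\VN_Q|+p(ans(Q)=0)\,|\VP_Q|$, so $\mathsf{EMCb}$ prefers the dispreferred query (a worked instance is $|\VN_Q|=5$ with one member of mass $0.91$). For $\mathsf{BME}$ I would arrange the count-larger, probability-heavier class so that removing a heavy hypothesis flips which side is probability-lighter and enlarges $|V_{Q,p,\min}|$ (e.g.\ $\VP_Q=\{h_1\}$ of mass $0.4$ against $\VN_Q$ of three hypotheses, one of mass $0.5$).

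For $\mathsf{SPL}$, writing $d=|\VP_Q|-|\VN_Q|$, $e=|X^+|-|X^-|$ and $s=|X|$ for the moved set, the swap-invariant identity $m_{\mathsf{SPL}}(Q')-m_{\mathsf{SPL}}(Q)=|d-e|-|d|+s$ holds; since $|d-e|\geq|d|-|e|$ and $|e|\leq s$, this difference is $\geq 0$, so $\mathsf{SPL}$ never strictly prefers the DPO-dispreferred query and is therefore consistent, while equality (hence a tie that defeats satisfaction) occurs whenever $X$ lies entirely on the count-larger class with $s\leq|d|$, e.g.\ $\VP_Q=\{h_1,h_2,h_3\}$, $\VN_Q=\{h_4\}$, $X=\{h_1\}$. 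For $\mathsf{MPS}$ the consistency argument is even simpler: any DPO-dispreferred $Q'$ has $\VZ_{Q'}\neq\emptyset$ (Prop.~\ref{prop:properties_derived_from_definitions}.3), so it is not a strong DQ and $m_{\mathsf{MPS}}(Q')=0$; as $\mathsf{MPS}$ is nonnegative, $Q'\prec_{\mathsf{MPS}}Q$ cannot hold. Satisfaction fails by taking $Q$ a strong DQ outside the singleton form (e.g.\ $|\VP_Q|=|\VN_Q|=2$), whence $m_{\mathsf{MPS}}(Q)=0$ as well and the pair is tied. The main obstacle is the third paragraph: unlike the balance-seeking measures, $\mathsf{EMCb}$ and $\mathsf{BME}$ must be fooled by placing the imbalance asymmetrically, and in each construction one must still verify that $Q'$ is a genuine DQ and a legitimate DPO successor of $Q$ per Prop.~\ref{prop:properties_derived_from_definitions}.5.
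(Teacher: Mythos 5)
Your proposal is correct and follows essentially the same route as the paper's proof: explicit numerical counterexamples (a pair $Q \prec_{\DPO} Q'$ with $Q' \prec_m Q$) for the ten non-consistent QSMs, with $\mathsf{EMCb}$ and $\mathsf{BME}$ requiring separately tailored asymmetric instances, and a structural argument via Prop.~\ref{prop:properties_derived_from_definitions}.5 showing that $\mathsf{SPL}$ and $\mathsf{MPS}$ can only yield $Q \prec_m Q'$ or a tie (the tie defeating satisfaction). The only differences are cosmetic: you consolidate eight measures into a single skewed-probability example where the paper reuses its running example under three distributions, and your explicit $|d-e|-|d|+s \geq 0$ identity for $\mathsf{SPL}$ spells out what the paper leaves as "can be shown using Prop.~\ref{prop:properties_derived_from_definitions}.5 and the QSM definitions."
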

\vspace{-10pt}
\begin{proof} (Sketch)
	We give counterexamples based on Tab.~\ref{tab:example_QPs}. 
	First, let the hypotheses probabilities $p := p_1$. Then $p(ans(Q_1)=1) = 0.4$ and $p(ans(Q_2)=1) = 0.5$.
	Hence, $Q_2 \prec_{m} Q_1$ for $m \in \setof{\mathsf{LC}, \mathsf{M}, \mathsf{H}}$. Due to the asymmetry of $\prec_{m}$ for each QSM $m$ (Prop.~\ref{prop:precedence_order_is_strict_order}), we have $\lnot(Q_1 \prec_{m} Q_2)$. But, $Q_1 \prec_{\mathsf{DPO}} Q_2$ (see Example above). Inconsistency of $m$ with the DPO follows from Def.~\ref{def:measure_satisfies_consistentWith_DPO}. In a similar way, we obtain $Q_2 \prec_{m} Q_1$ for $m \in \setof{\mathsf{VE},\mathsf{KL}}$ because $\mathsf{VE}(Q_1)=-\frac{2}{5}\log_2 \frac{2}{5}-\frac{3}{5}\log_2 \frac{3}{5} < -2\frac{1}{2}\log_2 \frac{1}{2} = \mathsf{VE}(Q_2)$ and $\mathsf{KL}(Q_1)=-\frac{2}{5}\log_2 (0.4)-\frac{3}{5}\log_2 (0.6) < -2\frac{1}{2}\log_2 \frac{1}{2} = \mathsf{KL}(Q_2)$. Further, assuming $p:=p_3$ we analogously find that $Q_4 \prec_{m} Q_3$ for $m \in \setof{\mathsf{ENT},\mathsf{EMCa},\mathsf{RIO}'}$ (letting $n := 1$ for $\mathsf{RIO}'$), and supposing $p:=p_2$ we realize that  $Q_4 \prec_{m} Q_3$ for $m \in \setof{\mathsf{EMCb},\mathsf{BME}}$. 
	
	For all $Q, Q' \in \UD$ where $Q \prec_{\mathsf{DPO}} Q'$ and $m \in \setof{\mathsf{SPL},\mathsf{MPS}}$ it can only hold that $Q \prec_{m} Q'$ or $m(Q) = m(Q')$. This can be shown using Prop.~\ref{prop:properties_derived_from_definitions}.5 and the QSM definitions. Thence, $Q' \prec_{m} Q$ cannot hold which is why $m$ is consistent by Def.~\ref{def:measure_satisfies_consistentWith_DPO}.  
\end{proof}
For the QSMs $\mathsf{ENT}$, $\mathsf{SPL}$, $\mathsf{EMCa}$ and $\mathsf{MPS}$ we can derive (parameterized) improved versions $\mathsf{ENT}_z$, $\mathsf{SPL}_z$, $\mathsf{EMCa}_z$ and $\mathsf{MPS}'$ that satisfy the DPO (see col.~3 of Tab.~\ref{tab:measures_satisfy_DPR_theoretical_opt_exists}). The idea with all these QSMs is to penalize the inclusion of hypotheses in $\VZ_{Q}$. Because, the more elements there are in $\VZ_{Q}$, the less the query $Q$ tends to be favored by the DPO. However, it is material to obey that this penalization must be as subtle as possible in order to \emph{preserve the query selection characteristics} of the respective QSM. For instance, consider the QSM $\mathsf{ENT}$ and two queries $Q,Q'$ with $\langle p(\VP_{Q}), p(\VN_{Q}), p(\VZ_{Q}) \rangle = \langle 0.01,0.99,0\rangle$ and $\langle p(\VP_{Q'}), p(\VN_{Q'}), p(\VZ_{Q'}) \rangle = \langle 0.49,0.49,0.02\rangle$. Obviously, since $\mathsf{ENT}$ favors queries with $50$-$50$ answer probability and low $p(\VZ_Q)$, it should clearly give $Q'$ preference over $Q$ although $\VZ_{Q'} \neq \emptyset$ and $\VZ_Q = \emptyset$. Using $\mathsf{ENT}_z$ with e.g.\ $z := 50$ would however imply $\mathsf{ENT}_{z}(Q) \approx 0.92 < 0.99 \approx \mathsf{ENT}_{z}(Q')$ which contradicts the nature of entropy query selection. In general, $m_z \not\equiv m_r$ for parameters $z \neq r$ and the difference regarding query selection grows with $|z-r|$.

We now state the relationship between the specified $z$-parameter 
and DPO compliance of the new QSMs:
\begin{proposition}\label{prop:z_parametrization_and_DPO_compliance}
	Ad $\mathsf{ENT}_z$ \cite[Cor.~3+4]{zattach}: Let 
	for all $Q\in\UD$ be $\min_{a\in\{0,1\}} p(ans(Q)=a) > t > 0$. Then, for any $z \geq \max \setof{-\frac{1}{2}(\log_2 t - \log_2 (1-t)),1}$, $\mathsf{ENT}_z$ satisfies the DPO over $\UD$. Further, $\mathsf{ENT}_s \prec \mathsf{ENT}_r$ for $0 \leq r < s$. 
	
	Ad $\mathsf{EMCa}_z$ \cite[Cor.~13]{zattach}: For all $z \geq 2$ and $r \geq 0$, $\mathsf{EMCa}_{z}$ satisfies the DPO and is superior to $\mathsf{ENT}_r$. 
	
	Ad $\mathsf{SPL}_z$ \cite[Prop.~19]{zattach}: $\mathsf{SPL}_z$ is (inconsistent with / consistent with, but not satisfying / satisfying) the DPO for all ($z < 1$ / $z=1$ / $z > 1$).
	%
\end{proposition}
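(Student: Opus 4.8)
The plan is to treat the three measures uniformly by first reducing, via Prop.~\ref{prop:properties_derived_from_definitions}.5, every pair $Q \prec_{\DPO} Q'$ to canonical coordinates. That proposition says $\Pt{V}{Q'}$ arises from $\Pt{V}{Q}$ by moving a nonempty $X = X^+ \cup X^-$ (with $X^+ \subseteq \VP_Q$, $X^- \subseteq \VN_Q$) into $\VZ_Q$, possibly followed by a swap of the first two components. Since $\mathsf{ENT}_z$, $\mathsf{EMCa}_z$ and $\mathsf{SPL}_z$ are all invariant under exchanging $\VP_Q$ and $\VN_Q$, I discard the swap and assume $\VP_{Q'} = \VP_Q \setminus X^+$, $\VN_{Q'} = \VN_Q \setminus X^-$, $\VZ_{Q'} = \VZ_Q \cup X$. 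A short computation then gives $p(ans(Q')=1) = \alpha + \delta$ with $\alpha := p(ans(Q)=1)$ and $\delta := (p(X^-)-p(X^+))/2$, together with $p(\VZ_{Q'}) = p(\VZ_Q) + x$, $x := p(X) > 0$, and $|\delta| \le x/2$; on the cardinality side $|\VZ_{Q'}| = |\VZ_Q| + k$ with $k := |X| \ge 1$, and $\bigl||\VP_{Q'}| - |\VN_{Q'}|\bigr| = |D-d|$ with $D := |\VP_Q|-|\VN_Q|$, $d := |X^+|-|X^-|$, $|d| \le k$. For each measure it then suffices to compare its value on $Q$ with that on $Q'$ in these coordinates.

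For $\mathsf{SPL}_z$ the argument is purely combinatorial and handles all three regimes at once. Using the triangle inequality $|D-d| \ge |D| - |d| \ge |D| - k$, one gets $\mathsf{SPL}_z(Q') - \mathsf{SPL}_z(Q) = |D-d| - |D| + zk \ge (z-1)k$. For $z>1$ this is strictly positive, so $Q \prec_{\mathsf{SPL}_z} Q'$ always and the DPO is satisfied; for $z=1$ it is $\ge 0$, ruling out $Q' \prec_{\mathsf{SPL}_z} Q$ (consistency), while a balanced instance with $D=5$ and $k \le D$ hypotheses removed from $\VP_Q$ (so $d=k$) attains equality, showing the DPO is \emph{not} satisfied; for $z<1$ that same instance yields $(z-1)k<0$, i.e.\ $Q' \prec_{\mathsf{SPL}_z} Q$ although $Q \prec_{\DPO} Q'$, contradicting consistency. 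This is the stated trichotomy.

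For $\mathsf{EMCa}_z$ (maximized) I expand $\phi(\alpha) := 2\alpha(1-\alpha)$ to obtain $\phi(\alpha+\delta)-\phi(\alpha) = 2\delta(1-2\alpha) - 2\delta^2$, so the DPO holds on a pair iff $4\delta(1-2\alpha) - 4\delta^2 < zx$. Because $Q$ is a DQ, $p(\VP_Q),p(\VN_Q)>0$ force $\alpha \in (0,1)$, hence $|1-2\alpha|<1$, and with $|\delta| \le x/2$ I bound $4\delta(1-2\alpha) - 4\delta^2 \le 4|\delta|\,|1-2\alpha| < 4|\delta| \le 2x \le zx$ for every $z \ge 2$ (the strict middle step covers $\delta \neq 0$, while $\delta=0$ is immediate from $x>0$). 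Thus $\mathsf{EMCa}_z$ satisfies the DPO over all of $\UD$ for $z \ge 2$. Superiority $\mathsf{EMCa}_z \prec \mathsf{ENT}_r$ then follows directly from Prop.~\ref{prop:properties_derived_from_definitions}.2, once one notes that $\mathsf{ENT}_r$ fails the DPO over unrestricted $\UD$ for every fixed $r \ge 0$, which is the content of the next paragraph.

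The hard part is $\mathsf{ENT}_z$. Writing $g(\alpha) := \alpha\log_2\alpha + (1-\alpha)\log_2(1-\alpha)$, I have $\mathsf{ENT}_z(Q') - \mathsf{ENT}_z(Q) = zx + \bigl(g(\alpha+\delta) - g(\alpha)\bigr)$, so satisfaction reduces to $zx > g(\alpha) - g(\alpha+\delta)$. The obstacle is that $g'(\alpha) = \log_2\tfrac{\alpha}{1-\alpha}$ diverges as $\alpha \to 0,1$, so no finite $z$ can work over all of $\UD$; this is exactly why the hypothesis $\min_a p(ans(Q)=a) > t$ is imposed. Under it both $\alpha,\alpha+\delta \in (t,1-t)$, and the mean value theorem gives $g(\alpha)-g(\alpha+\delta) = -g'(\xi)\delta$ with $\xi \in (t,1-t)$; since $g'$ is strictly increasing, $|g'(\xi)| < \log_2\tfrac{1-t}{t}$ \emph{strictly}, whence $g(\alpha)-g(\alpha+\delta) < \tfrac12(\log_2(1-t)-\log_2 t)\,x \le zx$ for $z \ge -\tfrac12(\log_2 t - \log_2(1-t))$, matching the stated constant (the extra $\max$ with $1$ merely guarantees $z>0$, covering the balanced transfers $\delta=0$). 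For the monotonicity $\mathsf{ENT}_s \prec \mathsf{ENT}_r$ with $r<s$, I use that $\mathsf{ENT}_z(Q')-\mathsf{ENT}_z(Q)$ is affine and strictly increasing in $z$ (slope $x>0$): the set of DPO-pairs on which $\mathsf{ENT}_z$ complies grows with $z$, which gives clause~(2) of Def.~\ref{def:measure_superior} at once, and for clause~(1) I exhibit a pair whose critical value $\tfrac{g(\alpha)-g(\alpha+\delta)}{x}$ lies in $[r,s)$, possible because this quotient varies continuously over $(0,\infty)$ as the partition ranges from near-balanced to near-boundary. I expect the two delicate points to be justifying the \emph{strict} inequality at the boundary threshold (which needs the open interval $(t,1-t)$ and strict monotonicity of $g'$) and producing an explicit realizing pair for clause~(1).
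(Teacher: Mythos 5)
Your proposal is correct and follows what is essentially the paper's route (the detailed proof is deferred to the cited technical report, but the reduction of every DPO-pair to a transfer $X$ via Prop.~\ref{prop:properties_derived_from_definitions}.5 followed by per-measure bounds is exactly the intended argument, and your mean-value bound $\tfrac12\sup_{\xi\in(t,1-t)}|g'(\xi)|=\tfrac12(\log_2(1-t)-\log_2 t)$ reproduces the stated $z$-threshold for $\mathsf{ENT}_z$ precisely, just as your $|\delta|\le x/2$ and $|1-2\alpha|<1$ estimates yield the constant $2$ for $\mathsf{EMCa}_z$ and the $(z-1)k$ bound yields the $\mathsf{SPL}_z$ trichotomy). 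The only loose ends are the witness pairs you yourself flag — that $\mathsf{ENT}_r$ fails the DPO on unrestricted $\UD$ and that the critical quotient hits $[r,s)$ — both of which are routine: take $V=\{h_1,h_2,h_3\}$ with $\VP_Q=\{h_1\}$, $p(h_1)=\tfrac{1}{1+4^{c}}$ for a target $c$, and move a low-probability $h_2$ from $\VN_Q$ into $\VZ_{Q'}$ (the converse direction of Prop.~\ref{prop:properties_derived_from_definitions}.5, i.e.\ that such a transfer really produces a DPO-pair, is immediate from Def.~\ref{def:DPO} with $f$ the identity).
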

So, whereas for $\mathsf{EMCa}_z$ and $\mathsf{SPL}_z$ a fixed $z$-value can guarantee DPO-satisfaction for any set of queries, for $\mathsf{ENT}_z$ the $z$-parameter depends on $t$. Given a set of DQs $\UD$ wrt.\ $V$, it holds that $t < \min_{h\in V} p(h)$ by the Def.\ of $p(ans(Q)=a)$ (cf.\ Sec.~\ref{sec:basics}). A respective choice of $z$ as per Prop.~\ref{prop:z_parametrization_and_DPO_compliance} implies that $\mathsf{ENT}_z$ preserves the DPO. 
It is moreover easy to see from the definition of $\mathsf{MPS}'$ that it satisfies the DPO.

\renewcommand{\arraystretch}{1.15}
\begin{table}
	\caption{Equivalence Classes (ECs) of QSMs wrt.\ the relations $\equiv$ and $\equiv_{\mathfrak{X}}$ (cf.\ Def.~\ref{def:measures_equivalent}). 
		$\mathfrak{X}$ is any set of queries where each $Q \in \mathfrak{X}$ satisfies $\VZ_Q = \emptyset$. 
		Circled numbers 
		$\footnotesize\circled[1]{\smaller[2]i}$
		provide reference to Tab.~\ref{tab:requirements_for_equiv_classes_of_measures_wrt_equiv_mQ}, which gives only one set of requirements for each numbered EC. 
		%
		%
	}
	\label{tab:measure_equiv_classes}
	\scriptsize
	\begin{center}
	\begin{tabular}{ll}
		\toprule
		& 		Equivalence Classes (ECs) 			\\ 
		\midrule
		\multirow{4}{8pt}{$\equiv$}					& $\setof{\mathsf{ENT}_1,\mathsf{ENT}}, \setof{\mathsf{ENT}_{z\,(z \notin\setof{0,1})}}, \setof{\mathsf{SPL}_1, \mathsf{SPL}},\setof{\mathsf{EMCb}},$ \\
		& $\setof{\mathsf{SPL}_{z\,(z \notin \setof{0,1})}}, \setof{\mathsf{RIO}'_1, \mathsf{RIO}'},\setof{\mathsf{RIO}'_{z\,(z \neq 1)}},\setof{\mathsf{KL}},    $ \\
		& $\setof{\mathsf{EMCa}_1,\mathsf{EMCa}}, \setof{\mathsf{EMCa}_{z\,(z \notin \setof{0,1})}},\setof{\mathsf{VE},\mathsf{SPL}_0},	$\\
		& $\setof{\mathsf{EMCa}_0,\mathsf{GI},\mathsf{LC},\mathsf{M},\mathsf{H}, \mathsf{ENT}_0},\setof{\mathsf{MPS}},\setof{\mathsf{MPS}'}, \setof{\mathsf{BME}}$ \\
		\midrule
		\multirow{3}{8pt}{$\equiv_{\mathfrak{X}}$}		&	 $ \footnotesize\circled[1]{\smaller[2]1}:\setof{\mathsf{EMCa},\mathsf{EMCa}_{z\,(z\in\mathbb{R})},\mathsf{GI},\mathsf{LC},\mathsf{M},\mathsf{H},\mathsf{ENT},\mathsf{ENT}_{z\,(z\in\mathbb{R})}},$\\ 
		& $\footnotesize\circled[1]{\smaller[2]2}:\setof{\mathsf{SPL},\mathsf{SPL}_{z\,(z\in\mathbb{R})},\mathsf{VE}},\footnotesize\circled[1]{\smaller[2]3}:\{\mathsf{RIO}',\mathsf{RIO}'_{z\,(z\in\mathbb{R})}\},$ \\
		& $\footnotesize\circled[1]{\smaller[2]4}:\setof{\mathsf{KL}},\footnotesize\circled[1]{\smaller[2]5}:\setof{\mathsf{EMCb}}, \footnotesize\circled[1]{\smaller[2]6}:\setof{\mathsf{MPS},\mathsf{MPS}'},\footnotesize\circled[1]{\smaller[2]7}:\setof{\mathsf{BME}}$														\\
		\bottomrule
	\end{tabular}
	\end{center}
\end{table}

\noindent\textbf{Equivalence Between QSMs.} Tab.~\ref{tab:measure_equiv_classes} summarizes equivalence classes (ECs) as per Def.~\ref{def:measures_equivalent} between QSMs over arbitrary queries (row $\equiv$) and over queries $\mathfrak{X}$ satisfying $\VZ_Q = \emptyset$ (row $\equiv_{\mathfrak{X}}$).
%
ECs wrt.\ $\equiv$ cluster QSMs that manifest the exact same query selection behavior in tasks (e.g.\ model-based diagnosis, abduction) where hypotheses might specify incomplete knowledge (cf.\ Sec.~\ref{sec:intro} and \ref{sec:basics}). If all hypotheses give complete information (as in many machine learning tasks), QSMs in an EC wrt.\ $\equiv_{\mathfrak{X}}$ behave equally. 
The pragmatics of the given ECs is the reduction of the possible QSM options for a certain task, i.e.\ it makes no sense to try to improve the performance of a learner by switching between QSMs of the same EC. Along with QSM superiority results below, the ECs provide a general guidance for proper QSM choice based on the type of application. 

The proofs of the stated QSM equivalences are either direct consequences of the QSMs' definitions (Tab.~\ref{tab:measures_satisfy_DPR_theoretical_opt_exists}, col.~3) or straightforward after simple algebraic transformations. E.g.\ $\mathsf{EMCa}_0 \equiv \mathsf{GI}$ since the latter can be equivalently transformed to the former by using $p(ans(Q)=0) = 1-p(ans(Q)=1)$. Further $\mathsf{LC}\equiv\mathsf{M} \equiv\mathsf{H}\equiv\mathsf{ENT}_0$ since there are only two possible query labels. Interestingly, the EC wrt.\ $\equiv$ comprising $\mathsf{GI}$ includes QSMs of three different query selection frameworks, US, IG and EMC (cf.\ Tab.~\ref{tab:measures_satisfy_DPR_theoretical_opt_exists}). Note that the ECs including $z$-parameterized QSMs represent infinitely many \emph{different} ECs, one for each setting of $z$, e.g.\ $\mathsf{ENT}_r \not\equiv \mathsf{ENT}_s$ for $r \neq s$ (cf.\ Prop.~\ref{prop:z_parametrization_and_DPO_compliance}).
Note that some of the ECs wrt.\ $\equiv$ conflate to constitute a single EC wrt.\ $\equiv_{\mathfrak{X}}$. In particular, those ECs merge which are equivalent except for their treatment of $\VZ_Q$. Hence, infinitely many ECs wrt.\ $\equiv$ reduce to mere $7$ ECs wrt.\ $\equiv_{\mathfrak{X}}$.

\begin{figure}[tb]
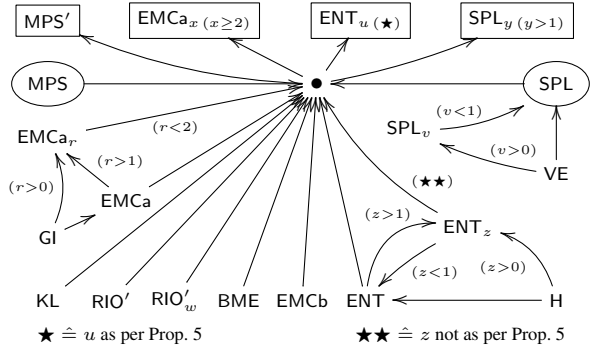

	\xygraph{
		!{<0cm,0cm>;<1.3cm,0cm>:<0cm,1.2cm>::}
		!{(0,4)}*+[F]{\scriptstyle\mathsf{MPS}'}="MPS'"
		!{(1.5,4)}*+[F]{\scriptstyle\mathsf{EMCa}_{x\,(x\geq 2)}}="EMCa_x"
		!{(3.2,4)}*+[F]{\scriptstyle\mathsf{ENT}_{u\,(\bigstar)}}="ENT_u"
		!{(4.8,4)}*+[F]{\scriptstyle\mathsf{SPL}_{y\,(y > 1)}}="SPL_y"
		!{(0,3.3)}*++[o][F]{\scriptstyle\mathsf{MPS}}="MPS"
		!{(5.2,3.3)}*++[o][F]{\scriptstyle\mathsf{SPL}}="SPL"
		!{(0,2.7)}*+{\scriptstyle\mathsf{EMCa}_r}="EMCa_r"
		!{(3.7,2.8)}*+{\scriptstyle\mathsf{SPL}_v}="SPL_v"
		!{(2.75,3.3)}*+{\bullet}="bullet"
		!{(5.2,2.3)}*+{\scriptstyle\mathsf{VE}}="VE"
		!{(0.8,2)}*+{\scriptstyle\mathsf{EMCa}}="EMCa"
		!{(3.25,0.9)}*+{\scriptstyle\mathsf{ENT}}="ENT"
		!{(4.3,1.7)}*+{\scriptstyle\mathsf{ENT}_z}="ENT_z"
		!{(0,1.6)}*+{\scriptstyle\mathsf{GI}}="Gini"
		!{(0,0.9)}*+{\scriptstyle\mathsf{KL}}="KL"
		!{(0.65,0.9)}*+{\scriptstyle\mathsf{RIO}'}="RIO"
		!{(1.3,0.9)}*+{\scriptstyle\mathsf{RIO}'_{w}}="RIO_w"
		!{(1.95,0.9)}*+{\scriptstyle\mathsf{BME}}="BME"
		!{(2.6,0.9)}*+{\scriptstyle\mathsf{EMCb}}="EMCb"
		!{(5.2,0.9)}*+{\scriptstyle\mathsf{H}}="H"
		"Gini":"EMCa"
		"EMCa_r":"bullet"_(0.45){\scriptscriptstyle(r<2)}
		"Gini":@/_0.2cm/"EMCa_r"^{\scriptscriptstyle(r>0)}
		"EMCa":"EMCa_r"_(0.4){\scriptscriptstyle(r>1)}
		"EMCa":"bullet"
		"MPS":"bullet"
		"ENT":"bullet"
		"bullet":"EMCa_x"
		"bullet":@/^0.2cm/"MPS'"
		"bullet":@/_0.1cm/"SPL_y"
		"bullet":"ENT_u"
		"VE":@/^0.1cm/"SPL_v"_(0.33){\scriptscriptstyle(v>0)}
		"VE":"SPL"
		"SPL_v":@/_0.2cm/"SPL"^(0.35){\scriptscriptstyle(v<1)}
		"H":"ENT"
		"H":@/_0.3cm/"ENT_z"^(0.4){\scriptscriptstyle(z>0)}
		"KL":"bullet"
		"RIO":"bullet"
		"RIO_w":"bullet"
		"BME":"bullet"
		"EMCb":"bullet"
		"SPL":"bullet"
		"ENT_z":@/_0.1cm/"ENT"^(0.5){\scriptscriptstyle(z<1)}
		"ENT":@/^0.5cm/"ENT_z"^(0.57){\scriptscriptstyle(z>1)}
		"ENT_z":@/^0.25cm/"bullet"_(0.3){{\scriptscriptstyle(\bigstar\bigstar)}}
	}
	\scriptsize \qquad$\bigstar \mathrel{\hat=} u$ as per Prop.~\ref{prop:z_parametrization_and_DPO_compliance} \qquad\qquad\qquad\qquad $\bigstar\bigstar \mathrel{\hat=} z $ not as per Prop.~\ref{prop:z_parametrization_and_DPO_compliance}
	\caption{QSM Superiority Relationships: $m_1 \to m_2$ denotes that $m_2 \prec m_1$ (cf.\ Def.~\ref{def:measure_superior}). Labeled arrows are conditional relations (hold only if label is true). Framed (circled) nodes indicate QSMs that satisfy (are consistent with) the DPO. Other nodes 
		are (in general) not consistent with the DPO. 
		For clarity, (1)~whenever possible, only one node for each EC in Tab.~\ref{tab:measure_equiv_classes}, row ``$\equiv$'' is depicted, and (2)~node $\bullet$ is used meaning that each incoming and outgoing arrow is to be combined.}
	\label{fig:superiority_relation_graph}
\end{figure}


\noindent\textbf{Superiority Between QSMs.} Fig.~\ref{fig:superiority_relation_graph} shows the QSM superiority relationships we derived.
Basically, these can be proven using Def.~\ref{def:measure_superior}, Prop.~\ref{prop:properties_derived_from_definitions},
the QSM functions $m(Q)$ (cf.\ Tab.~\ref{tab:measures_satisfy_DPR_theoretical_opt_exists}) and QSM equivalences (cf.\ Tab.~\ref{tab:measure_equiv_classes}). 
E.g.\ $\mathsf{ENT}_z$ for $z>0$ is superior to $\mathsf{H}$ since $\mathsf{ENT}_0 \equiv \mathsf{H}$ 
and $\mathsf{ENT}_z \prec \mathsf{ENT}_r$ for $z > r \geq 0$ by Prop.~\ref{prop:z_parametrization_and_DPO_compliance}.
%
%
Note, by Prop.~\ref{prop:properties_derived_from_definitions}.2, QSMs that satisfy the DPO (framed in Fig.~\ref{fig:superiority_relation_graph}) are proven superior to all that do not. Further, there are no $\mathfrak{X}$-superiority relationships between QSMs in the row $\equiv_{\mathfrak{X}}$ of Tab.~\ref{tab:measure_equiv_classes} by Prop.~\ref{prop:properties_derived_from_definitions}.3. In other words, the superiority graph in Fig.~\ref{fig:superiority_relation_graph} collapses over $\mathfrak{X}$ as defined in Tab.~\ref{tab:measure_equiv_classes}.

From the pragmatic viewpoint the superiority results 
are primarily relevant in a \emph{pool-based AL scenario} where a QSM is used to evaluate each query in a pool of queries and the best is selected to be shown to the oracle. Opting for a DPO-satisfying QSM then guarantees that no query is ever selected for which there is a better, i.e.\ discrimination-preferred one in the pool. However, Fig.~\ref{fig:superiority_relation_graph} must be read with care. For instance, it is not granted just due to $\mathsf{SPL}_y \prec \mathsf{KL}$ that $\mathsf{KL}$ will always manifest a worse performance (in terms of sample complexity) than $\mathsf{SPL}_y$ for $y>1$ in practice. The reason is that both QSMs follow quite different paradigms of query selection (cf.\ Tab.~\ref{tab:measures_satisfy_DPR_theoretical_opt_exists}, col.~3). Rather of interest are superiorities between \emph{related} QSMs, e.g.\ those from a particular QS-FW (cf.\ Tab~\ref{tab:measures_satisfy_DPR_theoretical_opt_exists}). For example, $\mathsf{SPL}_y$ for $y>1$ is superior to $\mathsf{SPL}$ and $\mathsf{VE}$ \emph{and implements the same preference paradigm}, attempting to eliminate half of the hypotheses in $V$. That is, (based on the parameter discussion before) one should prefer $\mathsf{SPL}_{y^*}$ (with preferably small $y^* >1$, e.g.\ $y^* := 1.1$) to the other two QSMs in pool-based AL.

\renewcommand{\arraystretch}{1.3}
\begin{table}[tb]
	\caption{Query Optimality Requirements for QSM ECs $\footnotesize\circled[1]{\smaller[2]i}$ in Tab.~\ref{tab:measure_equiv_classes}: 
		Roman numbers signalize priority, 
		i.e.\ higher numbered conditions are optimized over all queries that 
		optimize lower numbered conditions. 
	}
	\label{tab:requirements_for_equiv_classes_of_measures_wrt_equiv_mQ}
	\scriptsize
	\begin{center}
	\begin{tabular}{ll}
		\toprule
		EC 
		&   Requirements to Optimal Query  							\\
		\midrule
		${\footnotesize\circled[1]{\smaller[2]1}}$ & $\left|p(\VP_Q) - p(\VN_Q)\right| \to \min$ \\ 	
		${\footnotesize\circled[1]{\smaller[2]2}}$ &	$\left| |\VP_Q| - |\VN_Q| \right| \to \min$ \\
		${\footnotesize\circled[1]{\smaller[2]3}}$ &	$(\mathrm{I})~V_{Q,n} \to \min, (\mathrm{II})~\left|p(\VP_Q) - p(\VN_Q)\right| \to \min$ \\
		\multirow{2}{*}{${\footnotesize\circled[1]{\smaller[2]4}}$,${\footnotesize\circled[1]{\smaller[2]5}}$} &	[\,$p(\VP_Q) \to \max$ for some $|\VP_Q| \in \setof{1,\dots,|V|-1}$\,] 	$\lor$  \\ 
		&  $[\,p(\VN_Q) \to \max$ for some $|\VN_Q| \in \setof{1,\dots,|V|-1}\,]$ \\
		${\footnotesize\circled[1]{\smaller[2]6}}$ &	$(\mathrm{I})~|V^*|=1$, $V^*\in\setof{\VP_Q,\VN_Q}$, $(\mathrm{II})~p(V^*) \rightarrow \max$ \\
		${\footnotesize\circled[1]{\smaller[2]7}}$ &	$(\mathrm{I})~p(V^*)<0.5$, $V^*\in\setof{\VP_Q,\VN_Q}$, $(\mathrm{II})~|V^*| \rightarrow \max$  \\																		
		\bottomrule
	\end{tabular}
	\end{center}
\end{table}

\noindent\textbf{Properties of Optimal Queries.} We have investigated all the QSM functions $m(Q)$ in Tab.~\ref{tab:measures_satisfy_DPR_theoretical_opt_exists} wrt.\ their theoretical optima.
Most of the QSM analyses were relatively simple, e.g.\ for 
$\mathsf{SPL}$
one can easily see that no input can be better than one, say $X$, which satisfies $|\VP_X| = |\VN_X|$ and $|\VZ_X| = 0$. Moreover, for e.g.\ $m \in \setof{\mathsf{H},\mathsf{GI}}$ the existence of a theoretical optimum follows from the functions' concavity.
We report that for all discussed QSMs, except for $\mathsf{KL}$ and $\mathsf{EMCb}$, a (unique) theoretical optimum exists (Tab.~\ref{tab:measures_satisfy_DPR_theoretical_opt_exists}, last col.). 
%
In fact, analysis of the $\mathsf{KL}$ and $\mathsf{EMCb}$ functions 
yields only one stationary point which is a saddle point \cite[Prop.~27, 31]{zattach}. 

As a byproduct of studying the QSMs $m$,
we derived sufficient and necessary criteria an optimal query wrt.\ $m$ \emph{and $V$} 
must meet. Tab.~\ref{tab:requirements_for_equiv_classes_of_measures_wrt_equiv_mQ} summarizes the
results. 
Note, for $\mathsf{KL}$ and $\mathsf{EMCb}$ only necessary criteria can be named (see indeterminate 
conditions in row $\footnotesize\circled[1]{\smaller[2]4}$,$\footnotesize\circled[1]{\smaller[2]5}$). 
These can help to reduce the search space, i.e.\ optimal queries must be among those satisfying the conditions. E.g., if $Q_i,Q_j$ with $|\VP_{Q_i}|=|\VP_{Q_j}|$ and $p(\VP_{Q_i}) > p(\VP_{Q_j})$, then $Q_j$ cannot be optimal. 

The criteria in Tab.~\ref{tab:requirements_for_equiv_classes_of_measures_wrt_equiv_mQ} suggest how an optimal query wrt.\ a QSM might be \emph{systematically} constructed in a \emph{query synthesis AL scenario}. In the latter, one will usually (assuming a large enough set of unlabeled queries $\UD$) attempt to synthesize only strong DQs (cf.\ Sec.~\ref{sec:basics}), i.e.\ ones with empty $\VZ_Q$. For this reason Tab.~\ref{tab:requirements_for_equiv_classes_of_measures_wrt_equiv_mQ} just lists conditions for ECs in the $\equiv_{\mathfrak{X}}$-row of Tab.~\ref{tab:measure_equiv_classes}. In fact, the criteria target only properties \emph{of the partition} of a query (cf.\ beginning of Sec.~\ref{sec:contrib}). So, the idea is to devise \textbf{(1)} a search that enumerates DPs (cf.\ Sec.~\ref{sec:basics}) of $V$ 
in a best-first order driven by some heuristics $g_m$ derived from $m$'s optimality criteria. Once a (nearly) optimal DP is found, \textbf{(2)} a DQ for exactly this DP is generated.
%
Notably, a pro of \textbf{(1)} and \textbf{(2)} is that ideally only a single query is \emph{actually generated}. For the latter process 
might be computationally hard, e.g.\ in tasks involving logical deductions mentioned in Sec.~\ref{sec:intro}.

\begin{figure}[t]
	\begin{minipage}{0.6\textwidth} 
		\renewcommand{\arraystretch}{1}
		\setlength{\fboxsep}{0pt}
		\scriptsize 
		\xygraph{
			!{<0cm,0cm>;<1.5cm,0cm>:<0cm,1.5cm>::}
			!{(0,4.9)}*+[F]{\begin{minipage}{2.1cm}\centering
					$\scriptstyle \Part_0:$ \\
					$\scriptstyle\langle\emptyset\mid h_1,h_2,h_3,h_4 \mid\emptyset\rangle$ \\
					$\scriptstyle \mathit{probs}:\langle 0 \mid 1 \mid 0\rangle$
			\end{minipage} }="init"
			!{(0,3.9)}*+[F**:lightgray]{ {\begin{minipage}{2.1cm}\centering
						$\scriptstyle \Part_1:$ \\
						$\scriptstyle\langle h_2 \mid h_1,h_3,h_4 \mid \emptyset\rangle$ \\
						$\scriptstyle\mathit{probs}:\langle 0.15 \mid 0.85 \mid 0\rangle$ \\
						$\scriptstyle g_{\mathsf{RIO}'} = 0.067$
			\end{minipage}} }="lev1_h2"
			!{(0,2.8)}*+[F**:black][white]{ {\begin{minipage}{2.1cm}\centering
						$\scriptstyle \Part_2:$ \\
						$\scriptstyle\langle h_2,h_4 \mid h_1,h_3 \mid \emptyset\rangle$ \\
						$\scriptstyle\mathit{probs}:\langle 0.52 \mid 0.48 \mid 0\rangle$\\
						$\scriptstyle g_{\mathsf{RIO}'} = 0.02$
			\end{minipage}} }="lev2_h4"
			"init":@/^0em/^(0.45){0.15}_(0.45){h_2}"lev1_h2"
			"lev1_h2":@/^0em/^{0.37}_{h_4}"lev2_h4"
		}
		\label{fig:ex:RIO_n=2}
	\end{minipage}
	\hfill
	\begin{minipage}{0.35\textwidth}
		\centering
		\includegraphics[width=0.95\linewidth]{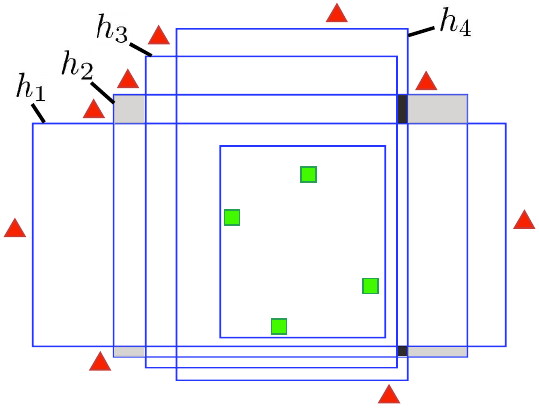} 
		\label{fig:partition_search_example}
	\end{minipage}
	\caption{Axis-parallel box classifier example \cite{settles09} (right). Heuristic search for optimal $\mathsf{RIO}'$-partition (left). Arrows point to the best successor partition as per the heuristic function $g_{\mathsf{RIO}'}$ and are labeled by the hypothesis $h_i$ and by the probability mass transferred from $\VN_Q$ to $\VP_Q$. $\mathit{probs}$ refers to $\langle p(\VP_Q) \mid p(\VN_Q)\mid p(\VZ_Q)\rangle$.}
\end{figure}

We illustrate how a (depth-first backtracking) search in \textbf{(1)} might work by means of the 
concept learning task in Fig.~\ref{fig:partition_search_example}(right) where $V = \setof{h_1,\dots,h_4}$ (rectangles), $\tuple{p(h_1),\dots,p(h_4)} = \tuple{0.41,0.15,0.07,0.37}$ and the QSM $\mathsf{RIO}'$ with $n = 2$ (cf.\ Tab.~\ref{tab:measures_satisfy_DPR_theoretical_opt_exists}) is used. 
The full version space $\V$ includes all rectangles covering all $1$-instances ($\scriptstyle\textcolor{green}{\blacksquare}$) and no $0$-instances ($\textcolor{red}{\blacktriangle}$).
Let the \emph{start partition} $\Part_0 = \langle\VP_Q,\VN_Q,\VZ_Q\rangle = \tuple{\emptyset,V,\emptyset}$, the \emph{successor function} map a partition to all neighbors resulting from the transfer of some $h \in \VN_Q$ to $\VP_Q$, the \emph{goal test} 
be $1$ iff $V_{Q,n} = 0 \land |p(\VP_Q) - p(\VN_Q)| \leq 0.05$ (cf.\ $\footnotesize\circled[1]{\smaller[2]3}$ in Tab.~\ref{tab:requirements_for_equiv_classes_of_measures_wrt_equiv_mQ}), and the \emph{heuristic function} $g_{\mathsf{RIO}'} := |p(\VP_Q)+(n-|\VP_Q|)(p(\VN_Q)/|\VN_Q|)-0.5|$. The latter returns the deviance of $p(\VP_Q)$ from $0.5$ if $|\VP_Q|=n$ is achieved by adding $n-|\VP_Q|$ further hypotheses, each with the expected probability $p(\VN_Q)/|\VN_Q|$. $g_{\mathsf{RIO}'}$ 
is used to evaluate all successors and suggests the best next one (with minimal $g_{\mathsf{RIO}'}$-value) to visit.
%
Fig.~\ref{fig:partition_search_example}(left) shows the resulting search tree (depicting only the best successors) with 3 generated partitions $\Part_0,\Part_1,\Part_2$. Note, all instances in gray and black areas, respectively, in Fig.~\ref{fig:partition_search_example}(right) are queries wrt.\ the nodes $\Part_1$ (gray) and $\Part_2$ (black) in the search tree. E.g., 
each instance $Q$ for which $\Pt{V}{Q} = \Part_1$ must be inside $h_2$ and outside of $h_1,h_3,h_4$ (cf.\ Sec.~\ref{sec:basics}).
We see that $g_{\mathsf{RIO}'}$ guides the search directly to a goal $\Part_2$. 
Generally, the algorithm could incorporate pruning criteria (devised from Tab.~\ref{tab:requirements_for_equiv_classes_of_measures_wrt_equiv_mQ}) and would backtrack if not successful along a branch. In this example a pruning rule could be to backtrack as soon as $|\VP_Q|>n$ as in this case each partition along any downward branch cannot be better than some already known one (cf.\ $V_{Q,n}$ in Tab.~\ref{tab:measures_satisfy_DPR_theoretical_opt_exists} and note that $\mathsf{ENT}(Q)/2 < 1$ by simple algebra).
Query ``generation'' in \textbf{(2)} would here be just the selection of any instance from the black areas. All of them discriminate wrt.\ $V$ as prescribed by the goal DP $\Part_2$. Note, in case no query exists for a found goal DP (as is the case e.g.\ for $\tuple{\setof{h_1,h_3},\setof{h_2,h_4},\emptyset}$), the steps \textbf{(1)} (search continuation) and \textbf{(2)} are reiterated. \cite[Sec.~3.4-3.7]{zattach} shows how \textbf{(1)} and \textbf{(2)} might be realized in the domain of model-based diagnosis, where more sophisticated successor computation in \textbf{(1)} and query generation in \textbf{(2)} must be addressed.
%

\section{CONCLUSIONS}
For active learning interpreted as search of the version space, useful for both classical machine learning and alternative problems like model-based diagnosis or abduction, we have formalized and derived
relationships between queries based on their discrimination power and between query selection measures based on their output quality. We have deduced optimality criteria for measures and introduced new (improved) variants to resolve identified issues. The results give guidance for using the right measure in pool-based active learning and suggest efficient search procedures for optimal query synthesis. 

\section{ACKNOWLEDGEMENTS}
This work was supported by the Carinthian Science Fund (KWF) contract KWF-3520/26767/38701.

\bibliographystyle{acm}
\bibliography{library}

%
%
%
%
%

\end{document}